\documentclass{article} 
\usepackage{iclr2026_conference,times}

\usepackage{amsmath,amsfonts,bm}

\def\eqref#1{equation~\ref{#1}}

\def\1{\bm{1}}

\DeclareMathAlphabet{\mathsfit}{\encodingdefault}{\sfdefault}{m}{sl}
\SetMathAlphabet{\mathsfit}{bold}{\encodingdefault}{\sfdefault}{bx}{n}

\usepackage[utf8]{inputenc} 
\usepackage[T1]{fontenc}    
\usepackage{hyperref}       
\usepackage{url}            
\usepackage{booktabs}       
\usepackage{amsfonts}       
\usepackage{nicefrac}       
\usepackage{microtype}      
\usepackage{xcolor}         

\usepackage{natbib}
\usepackage{graphicx}
\usepackage{wrapfig}
\usepackage{amsmath}
\usepackage{amsthm}
\usepackage{amssymb}
\usepackage{bbm}
\usepackage{mathtools}
\usepackage{subfigure}
\usepackage{algorithm}
\usepackage{algorithmic}
\usepackage{threeparttable}
\usepackage{colortbl}
\usepackage{multirow}           
\usepackage[breakable]{tcolorbox}    
\usepackage{caption}
\usepackage{subcaption}
\usepackage{float}
\usepackage{makecell}
\usepackage[normalem]{ulem}
\usepackage{enumitem}
\usepackage{multirow}

\definecolor{mydarkred}{RGB}{178,34,34}
\definecolor{irblue}{RGB}{61,89,171}
\definecolor{erred}{RGB}{171,48,96}
\definecolor{Gray}{gray}{0.75}

\newcommand{\methodname}{MAD-M$^2$}
\newcommand{\uprmvar}[2]{#1^{\rm #2}}

\theoremstyle{plain}
\newtheorem{theorem}{Theorem}[section]
\newtheorem{proposition}[theorem]{Proposition}

\newtheorem{assumption}[theorem]{Assumption}
\newtheorem{remark}{Remark}

\hypersetup{hidelinks,
colorlinks=true,
linkcolor=blue}

\title{Multi-Agent Debate with Memory Masking}

\makeatletter
\renewcommand*{\@fnsymbol}[1]{\ensuremath{\ifcase#1\or \dagger\or \ddagger\or
		\mathsection\or \mathparagraph\or \|\or **\or \dagger\dagger
		\or \ddagger\ddagger \else\@ctrerr\fi}}
\makeatother
\author{Hongduan Tian\textsuperscript{1}\quad Xiao Feng\textsuperscript{1}\quad Ziyuan Zhao\textsuperscript{2}\quad Xiangyu Zhu\textsuperscript{2}\quad Rolan Yan\textsuperscript{2}\quad Bo Han\textsuperscript{1} \thanks{Correspondence to Bo Han (bhanml@comp.hkbu.edu.hk).} \\
\textsuperscript{1}{TMLR Group, Hong Kong Baptist University}\quad  
\textsuperscript{2}{Search Algorithm Group, WeChat, Tencent}\\
\texttt{\{cshdtian, xiaofeng, bhanml\}@comp.hkbu.edu.hk}\\
\texttt{\{joshuazhao, xiangyuzhu, rolanyan\}@tencent.com}
}

\iclrfinalcopy 
\begin{document}

\maketitle
\vspace{-12pt}
\begin{abstract}
Large language models (LLMs) have recently demonstrated impressive capabilities in reasoning tasks. Currently, mainstream LLM reasoning frameworks predominantly focus on scaling up inference-time sampling to enhance performance. In particular, among all LLM reasoning frameworks, \textit{multi-agent debate} (MAD), which employs multiple LLMs as agents to perform reasoning in the way of multi-round debate, has emerged as a powerful reasoning paradigm since it allows agents to access previous memories to alleviate fallacious content and refine their reasoning iteratively in each debate round. However, although MAD significantly improves the reasoning capabilities of LLMs, in this paper, we observe that there remain erroneous memories, and LLM agents are vulnerable to these erroneous memories. To explore this phenomenon, we provide a theoretical insight that the performance of MAD is highly dependent on the quality of memories derived from the previous debate, indicating that the existence of erroneous memories poses a threat to the performance of MAD. To address this problem, we introduce a simple yet effective multi-agent debate framework, \textit{multi-agent debate with memory masking} (\methodname), to improve the robustness of MAD by allowing LLM agents to mask erroneous memories from the previous debate round at the beginning of each debate round. In this way, \methodname\ can polish the contextual information before each debate round by preserving informative and meaningful memories while discarding the erroneous memories. Extensive experiments on mainstream math reasoning and language understanding benchmarks demonstrate that \methodname\ can identify the erroneous memories and achieve better performance in reasoning than MAD. The code of \methodname\ is available at: \href{https://github.com/tmlr-group/MAD-MM}{https://github.com/tmlr-group/MAD-MM}.
\end{abstract}

\section{Introduction}
\label{section:introduction}
Large language models (LLMs) have shown impressive reasoning capabilities in a range of tasks and have attracted increasing attention in this field. Typically, in LLMs, provided with elaborately designed instructions~\citep{ouyang2022training}, LLMs can complete the tasks with exceptional instruction-following ability. Moreover, it has also been demonstrated that the reasoning capability can be further enhanced through fine-grained reasoning tricks (e.g., stepwise reasoning~\citep{cot, kojima2022large, zhou2023leasttomost} and reflection \citep{madaan2023selfrefine}) with abundant demonstrations~\citep{icl}. Despite such an impressive capability, a challenge in LLM reasoning is invalid facts and fallacious content. Although some previous works propose to alleviate the problem via multiple sampling~\citep{cot_sc, taubenfeld2025confidence}, due to the insufficient generation diversity issue~\citep{si2025can, hayati2023far}, LLMs may be stuck in their own biases, thereby limiting the efficacy of the test-time scaling strategies~\citep{snell2024scaling, brown2024large}. 


\begin{figure*}[t]
	\vskip 0.0in
	\begin{center}
	\centering
    \includegraphics[width=0.95\linewidth]{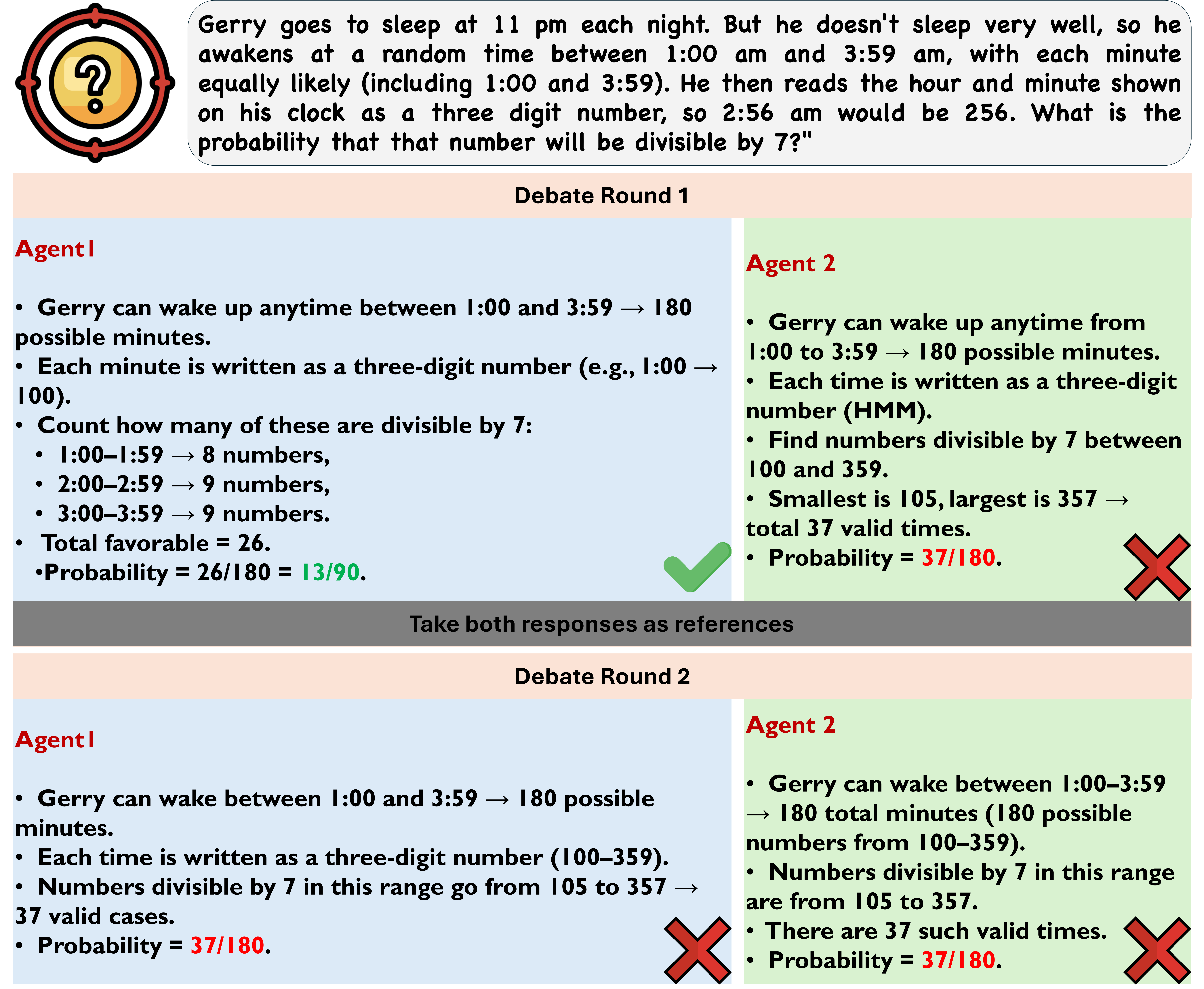}
    \caption{\textbf{An illustration of the effects of erroneous memories.} The example is a real case picked from the MATH dataset. In MAD, all memories in the previous debate round are considered in the next debate round. However, the memories from the previous round may include erroneous reasoning responses (cf. the responses of debate round 1 in the figure). The erroneous memory may misguide the agent, which was correct, and result in the wrong final answer (cf. Agent 1 in the debate round 2).}
    \label{fig:motivation}
	\end{center}
	\vskip -0.1in
 \vspace{-17.5pt}
\end{figure*}
Inspired by \textit{The Society of Mind} \citep{som}, \textit{Multi-agent debate} (MAD~\citep{mad_yilun}), which employs multiple LLMs as agents to perform reasoning via a multi-round debate, has recently emerged as an effective approach to overcome the aforementioned two limitations. Specifically, in each debate round (except for the first round), agents are required to generate answers based on their critical evaluation of all memories from the previous round. In this case, MAD improves the reasoning capability of LLMs mainly from two perspectives. On the one hand, in MAD, agents have access to all previous memories and generate new answers based on their critical evaluation of memories. Thus, the invalid and fallacious content can be identified and corrected. On the other hand, by employing previous memories as in-context information for the next debate round, agents may get rid of their inherent bias~\citep{panickssery2024llm} and obtain better answers regarding the query. 
Nevertheless, a concern about MAD remains: \textit{Are LLM agents robust to erroneous memories?} For example, as shown in Fig.~\ref{fig:motivation}, memories derived from the previous debate round may be incorrect (e.g., Agent 2 in the debate round 1), and the agent, which takes all memories into consideration, will be misguided and generate the incorrect answers (e.g., Agent 1 in the debate round 2), thereby undermining the reasoning performance. As far as we know, this concern remains underexplored.

To achieve a comprehensive understanding of MAD, we provide an analysis from the perspective of the probability that MAD can correctly infer the answer to a given query. The results indicate that the multi-agent debate framework, under mild assumptions, is vulnerable to erroneous memories. Specifically, the performance of LLM agents in a debate round closely depends on the number of wrong memories in the previous debate round. When the number of wrong memories from the previous debate round increases, the reasoning capability of LLMs degrades correspondingly. According to our theoretical analyses, in both easy and hard reasoning scenarios, enhancing the robustness of LLM agents to erroneous memories consistently improves the performance of MAD.

Motivated by the empirical observation and theoretical analysis, we propose a simple yet effective method, \textit{multi-agent debate with memory masking} (\methodname), to enhance the robustness of MAD to erroneous memories by allowing LLMs to mask the incorrect memories in the previous debate round depending on the evaluations on those memories. Specifically, at each debate round (except for the first debate round), the memories derived from the previous debate round will be critically evaluated. Then, a binary mask vector will be generated according to the evaluations to filter the memories. Then, LLM agents are required to generate new responses based on the preserved memories. The final answer will be selected from a set of responses by performing majority voting in the final round. 
Intuitively, the reasoning in each round resembles an in-context learning process~\citep{wei2022emergent}. In this case, erroneous memories are equivalent to poor-quality demonstrations that may distract LLMs~\citep{shi2023large}, thereby leading to poor performance. Thus, \methodname\ improves MAD's capability by filtering out fallacious content to improve contextual information~\citep{mei2025survey}.

To validate the effectiveness of \methodname, mainstream mathematical reasoning (e.g., GSM8K~\citep{gsm8k_dataset}, MATH~\citep{math_dataset}, and AIME24\&25) and language understanding (e.g., MMLU-Pro~\citep{wang2024mmlu}) benchmarks are adopted for evaluation. Empirically, \methodname\ generally achieves better performance than naive MAD~\citep{mad_yilun} on various reasoning tasks. 

Overall, our contribution in this paper can be primarily summarized as the following four aspects: 
\vspace{-5pt}
\begin{itemize}[leftmargin=.1in]
\setlength{\itemsep}{-0.05em}
\item We find that LLM agents in the conventional MAD framework are vulnerable to erroneous memories derived from the last debate round and may be misled to incorrect reasoning responses (c.f. Fig.~\ref{fig:motivation}). 
\item In Section~\ref{section:overview_mad}, we demonstrate the vulnerability of the MAD framework from a mathematical perspective. Theoretically, compared to simply increasing the number of sampling/agents, filtering out erroneous memories in the last debate consistently improves the performance of MAD in all cases. As far as we know, ours is the first work to discuss this phenomenon in multi-agent debate.
\item In Section~\ref{sec:mad_mm}, to alleviate the detriment above, we propose a simple yet effective multi-agent debate framework, \textit{multi-agent debate with memory masking} (\methodname), to enhance the capability of MAD by critically evaluating memories in the last debate and masking those incorrect memories.
\item In Section~\ref{sec:experiments}, we empirically evaluate \methodname\ on both mathematical and language understanding benchmarks and demonstrate that \methodname\ generally achieves better performance than MAD. 
\end{itemize}

\vspace{-8pt}
\section{An Overview of Multi-Agent Debate Framework}
\label{section:overview_mad}
\vspace{-6pt}
In this section, we first introduce an overview and formulate the workflow of the conventional multi-agent debate framework~\citep{mad_yilun}. Then, we theoretically analyze the probability of MAD successfully inferring the answer to a given query. The results further demonstrate that the naive MAD framework is vulnerable to erroneous memories derived from the last round of debate.

\vspace{-6pt}
\subsection{Problem Formulation of Multi-Agent Debate}\label{subsec:mad_formulation}
\vspace{-4pt}
Multi-agent debate (MAD)~\citep{mad_yilun} is a powerful reasoning paradigm that infers the answers to questions through multiple rounds of interactions among multiple LLM agents. Specifically, in the typical multi-agent debate framework, LLMs are first instructed to perform reasoning on the queries independently at the initial round. Then, from the second round, agents are required to critically evaluate the memories of all agents in the last debate round to generate new responses. After a multi-round debate, the final answer is obtained by performing majority voting on a set of candidates. 

Consider a typical multi-agent debate framework~\citep{mad_yilun} composed of a total of $N_{\rm round}$ debate rounds and a set of $N_{\rm a}$ LLM agents $\mathcal{A}=\{A_{\theta_1}, A_{\theta_2}, ..., A_{\theta_{N_{\rm a}}}\}$ that are parameterized with parameters $\theta_i$, respectively. Consider a query prompt $\uprmvar{x}{test}\in\mathcal{X}$ that includes both task instructions and the question, where $\mathcal{X}=\{\uprmvar{x}{test}_1, \uprmvar{x}{test}_2, ..., \uprmvar{x}{test}_{N_t}\}$ is a discrete set of $N_t$ query prompts. Let $\mathcal{M}_r=[A_{\theta_1}(\uprmvar{x}{test}, \mathcal{M}_{r-1}), A_{\theta_2}(\uprmvar{x}{test}, \mathcal{M}_{r-1}), ..., A_{\theta_{N_{\rm a}}}(\uprmvar{x}{test}, \mathcal{M}_{r-1})]$, where $1\leq r\leq N_{\rm round}, \mathcal{M}_0=\emptyset$, denote the memory set derived from the $r$-th debate round. Then, for each LLM agent $A_{\theta_i}\in\mathcal{A}$, given the memories $\mathcal{M}_r$ and prompt $\uprmvar{x}{test}$, the response generation in the next round is formulated as:
\begin{equation}
\label{eq:agent_generation}
\centering
    A_{\theta_i}(\uprmvar{x}{test}, \mathcal{M}_r) = \hat{x}_{1:L}, \text{where}\  \hat{x}_l=\mathop{\arg\max}_{x} P(x|\hat{x}_{<l}; \uprmvar{x}{test}; \mathcal{M}_{r}; \theta_i),
\end{equation}
where $\hat{x}_l$ denotes the $l$-th word prediction of the response, $\hat{x}_{<l}$ denotes the previous word predictions of $\hat{x}_l$, and $L$ denotes the length of the response. The response $\hat{x}_{1:L}$ is a sequence with the length of $L$.

\vspace{-6pt}
\subsection{MAD is Vulnerable to Erroneous Memories}
\label{subsec:vulnerability_mad}
\vspace{-4pt}
In the typical multi-agent debate framework (cf. Eq.~(\ref{eq:agent_generation})), LLM agents are required to generate new responses based on the memories derived from the previous debate round. Thus, intuitively, erroneous memories in the previous debate round may potentially undermine the performance of MAD (cf. Fig.~\ref{fig:motivation}). However, as far as we know, few works have discussed this issue in detail. Thus, in this section, we provide a discussion to explore the effect of erroneous memories on the MAD framework. 

\vspace{-6pt}
\subsubsection{A Theoretical Perspective for Understanding of MAD}
\label{subsubsec:theoretical_understanding_of_mad}
\vspace{-3pt}
In this section, we first provide a theoretical understanding of MAD. As a comparison, we here adopt CoT-SC~\citep{cot_sc} as the counterpart of MAD. From the perspective of the reasoning paradigm, both CoT-SC and MAD can be seen as the test-time scaling reasoning framework~\citep{snell2024scaling,brown2024large}, which improves the robustness and the performance of reasoning by sampling multiple responses for a single query. The main difference between the two reasoning paradigms is that CoT-SC performs reasoning once and obtains the answer by voting for the majority among a set of responses, while MAD is formulated as the multi-round reasoning among multiple agents, where the final answer is voted from responses generated by agents in the final debate round. 

\begin{assumption}\label{assumption}
    Assume that the probability that an agent independently generates the correct answer only with the given query is $p\in[0, 1]$, while the probability that the agent generates correct answers to the query based on the previous memories is assumed to be $e^{-\alpha N_{\rm e}}$, where $N_{\rm e}$ denotes the number of erroneous memories and $\alpha\in\mathbb{R^+}$ is a constant coefficient that indicates the robustness of agents to erroneous memories. The reasoning is deemed as successful when the number of correct answers $N_{\rm cor}$ in the final round satisfies $N_{\rm cor}>\frac{N_{\rm res}}2$, where $N_{\rm res}$ denotes the number of responses.
\end{assumption}

\begin{remark}
    In Assumption~\ref{assumption}, we formulate the probability that a single LLM agent generates the correct answer to the given query based on the previous memories as $e^{-\alpha N_{\rm e}}$, where $N_{\rm e}\in\{0, 1, 2, ..., N_{\rm a}\}$ denotes the number of erroneous memories. Two aspects are considered here. On the one hand, in an ideal case where the number of erroneous memories is $0$, the probability of the agent generating the correct reasoning in the next debate round will be $1$. On the other hand, in the case where all memories are erroneous (i.e., $N_{\rm e}=N_{\rm a}$), the probability of the agent generating a correct response to the query will degrade to $e^{-\alpha N_{\rm a}}>0$. This aligns with the situation that agents can retain a non-zero probability of generating correct reasoning even if all memories are erroneous.
\end{remark}

\begin{proposition}[\textbf{CoT-SC}]\label{proposition:cot-sc}
    Consider a total number of $N_{\rm sc}$ independent responses generated in the way of CoT-SC. With Assumption~\ref{assumption}, the probability that the final answer is correct is bounded by:
    \[
    \begin{aligned}
        &P(N_{\rm cor}>\frac{N_{\rm sc}}{2})\leq\exp\left(-2N_{\rm sc}(\frac{1}{2}-p)^2\right) ,   &p<\frac{1}{2}, \\
        &P(N_{\rm cor}>\frac{N_{\rm sc}}{2})\geq1-\exp\left(-2N_{\rm sc}(\frac{1}{2}-p)^2\right) , &p\geq\frac{1}{2}.
    \end{aligned}
    \]
     The corresponding lower bound and upper bound of cases $p<\frac{1}{2}$ and $p\geq\frac{1}{2}$ are $0$ and $1$, respectively.
\end{proposition}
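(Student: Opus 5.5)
Under Assumption~\ref{assumption}, the $N_{\rm sc}$ CoT-SC responses are independent and each is correct with probability $p$. So we model them as i.i.d.\ Bernoulli variables $X_1,\dots,X_{N_{\rm sc}}$ with mean $p$, and set $N_{\rm cor}=\sum_{j=1}^{N_{\rm sc}}X_j$. Then $\mathbb{E}[N_{\rm cor}]=N_{\rm sc}p$. Each $X_j$ lies in $[0,1]$, so Hoeffding's inequality gives, for any $t\geq 0$,
\[
P\big(N_{\rm cor}-N_{\rm sc}p\geq t\big)\leq \exp\!\left(-\tfrac{2t^2}{N_{\rm sc}}\right),
\qquad
P\big(N_{\rm cor}-N_{\rm sc}p\leq -t\big)\leq \exp\!\left(-\tfrac{2t^2}{N_{\rm sc}}\right).
\]
The plan is to apply one tail in each regime with $t=N_{\rm sc}\,|\tfrac12-p|$, which yields exactly the exponent $-2N_{\rm sc}(\tfrac12-p)^2$.

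\emph{Case $p<\tfrac12$.} Here $t=N_{\rm sc}(\tfrac12-p)>0$. The event $N_{\rm cor}>\tfrac{N_{\rm sc}}2$ is contained in $\{N_{\rm cor}-N_{\rm sc}p\geq t\}$. The upper tail then gives
\[
P\big(N_{\rm cor}>\tfrac{N_{\rm sc}}2\big)\leq \exp\!\big(-2N_{\rm sc}(\tfrac12-p)^2\big).
\]
The trivial lower bound is $0$, as stated.

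\emph{Case $p\geq\tfrac12$.} Pass to the complement: $P(N_{\rm cor}>\tfrac{N_{\rm sc}}2)=1-P(N_{\rm cor}\leq\tfrac{N_{\rm sc}}2)$. With $t=N_{\rm sc}(p-\tfrac12)\geq0$, the event $N_{\rm cor}\leq\tfrac{N_{\rm sc}}2$ equals $\{N_{\rm cor}-N_{\rm sc}p\leq -t\}$. The lower tail bounds its probability by $\exp(-2N_{\rm sc}(p-\tfrac12)^2)$, which gives the claimed lower bound. The trivial upper bound is $1$.

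The only delicate point is the boundary $p=\tfrac12$. There $t=0$, and the bound reads $P\geq 0$, which is trivially true. The strict versus non-strict inequality in the event also needs care, but the containments above handle it since Hoeffding's bounds are stated with non-strict inequalities. The independence of the samples is the modelling content of Assumption~\ref{assumption} for CoT-SC, so no further obstacle arises.
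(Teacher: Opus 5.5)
Your proof is correct and follows essentially the same route as the paper. You model $N_{\rm cor}$ as a sum of i.i.d.\ Bernoulli$(p)$ variables and apply Hoeffding with $t=N_{\rm sc}|\tfrac12-p|$: the upper tail directly when $p<\tfrac12$, and the complement together with the lower tail when $p\ge\tfrac12$. It is in fact slightly cleaner than the paper's write-up, since you state the complementary event correctly as $\{N_{\rm cor}-N_{\rm sc}p\le -t\}$ and note that $t=0$ at $p=\tfrac12$, whereas the paper asserts $t>0$ there.
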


\begin{proposition}[\textbf{MAD}]\label{proposition:mad}
    Consider a 2-round MAD reasoning, where $N_{\rm a}$ agents are involved in each debate round. With Assumption~\ref{assumption}, the probability that the final answer is correct is bounded by:
    \begin{small}
    \[
    \begin{aligned}
        &P(N^{(2)}_{\rm cor}>\frac{N_{\rm a}}{2}) \leq \sum_{j=0}^{j^*}\omega_j\exp\left(-2N_{\rm a}\left(\frac{1}{2}-e^{\alpha(j-N_{\rm a})}\right)^2\right)+\sum_{j=j^*+1}^{N_{\rm a}}\omega_j,&e^{\alpha(j-N_{\rm a})}<\frac{1}{2}, \\
        &P(N^{(2)}_{\rm cor}>\frac{N_{\rm a}}{2}) \geq \sum_{j=j^*+1}^{N_{\rm a}}\omega_j\left(1-\exp\left(-2N_{\rm a}\left(\frac{1}{2}-e^{\alpha(j-N_{\rm a})}\right)^2\right)\right), &e^{\alpha(j-N_{\rm a})}\geq\frac{1}{2},
    \end{aligned}
    \]
    \end{small}
    where $\omega_j=\binom{N_{\rm a}}{j}p^j(1-p)^{N_{\rm a}-j}$, $j^*=\lfloor N_{\rm a}-\frac{\ln2}{\alpha}\rfloor$. For simplicity, the corresponding lower and upper bounds of situations $e^{\alpha(j-N_{\rm a})}<\frac{1}{2}$ and $e^{\alpha(j-N_{\rm a})}\geq\frac{1}{2}$ are trivial bounds $0$ and $1$, respectively.
\end{proposition}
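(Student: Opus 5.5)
We will condition on the outcome of the first debate round and then apply the argument behind Proposition~\ref{proposition:cot-sc} inside each conditional branch.

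\textbf{Step 1: the first round.} In round 1 each of the $N_{\rm a}$ agents answers independently from the query alone. By Assumption~\ref{assumption}, the number $J$ of correct first-round memories is therefore $\mathrm{Binomial}(N_{\rm a},p)$, so
\[
P(J=j)=\omega_j=\binom{N_{\rm a}}{j}p^j(1-p)^{N_{\rm a}-j}.
\]
Given $J=j$, the number of erroneous memories is $N_{\rm e}=N_{\rm a}-j$.

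\textbf{Step 2: the second round, conditionally.} Given $J=j$, each second-round agent sees the same memory set. By Assumption~\ref{assumption}, it answers correctly with probability
\[
q_j=e^{-\alpha(N_{\rm a}-j)}=e^{\alpha(j-N_{\rm a})}.
\]
I treat the second-round agents as conditionally independent given $\mathcal{M}_1$; this is the modelling step the argument needs. Then $N^{(2)}_{\rm cor}\mid J=j\sim\mathrm{Binomial}(N_{\rm a},q_j)$. The law of total probability gives
\[
P\Big(N^{(2)}_{\rm cor}>\tfrac{N_{\rm a}}{2}\Big)=\sum_{j=0}^{N_{\rm a}}\omega_j\,P\Big(N^{(2)}_{\rm cor}>\tfrac{N_{\rm a}}{2}\,\Big|\,J=j\Big).
\]

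\textbf{Step 3: splitting the sum at $j^*$.} We have $q_j<\tfrac12$ exactly when $j<N_{\rm a}-\tfrac{\ln 2}{\alpha}$. This is essentially $j\le j^*=\lfloor N_{\rm a}-\tfrac{\ln2}{\alpha}\rfloor$. At the boundary, where $N_{\rm a}-\tfrac{\ln2}{\alpha}$ is an integer, $q_{j^*}=\tfrac12$ and the Hoeffding term equals $1$, so that case is harmless.

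For each conditional probability we apply Hoeffding's inequality, exactly as in Proposition~\ref{proposition:cot-sc} with $N_{\rm sc}$ replaced by $N_{\rm a}$ and $p$ by $q_j$:
\begin{itemize}
\item If $q_j<\tfrac12$, the conditional probability is at most $\exp\!\big(-2N_{\rm a}(\tfrac12-q_j)^2\big)$.
\item If $q_j\ge\tfrac12$, it is at least $1-\exp\!\big(-2N_{\rm a}(\tfrac12-q_j)^2\big)$.
\end{itemize}

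\textbf{Step 4: assembling the two bounds.}
\begin{itemize}
\item \emph{Upper bound.} Use the Hoeffding bound for $j\le j^*$, and the trivial bound $1$ for $j>j^*$. The latter produces the tail term $\sum_{j>j^*}\omega_j$.
\item \emph{Lower bound.} Use the Hoeffding lower bound for $j>j^*$, and drop the terms with $j\le j^*$ using the trivial lower bound $0$.
\end{itemize}
The trivial bounds mentioned at the end of the statement are simply the complementary estimates $0$ and $1$ on each half.

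\textbf{Expected obstacle.} The only delicate point is justifying conditional independence in round 2, since all agents share the same context. I would state it explicitly as part of the interpretation of Assumption~\ref{assumption}, namely that each agent's correctness is an independent Bernoulli draw given the memories. A secondary point is the floor and boundary bookkeeping at $j^*$, which the Hoeffding bound's validity at $q=\tfrac12$ resolves.
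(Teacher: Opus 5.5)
Your proposal is correct and follows the paper's proof essentially step for step. Like the paper, you condition on the binomial first-round count $j$ and treat the second round as conditionally $\mathrm{Binomial}(N_{\rm a},e^{\alpha(j-N_{\rm a})})$; the paper relies on the same conditional independence but states it less explicitly. You then split the total-probability sum at $j^*$, apply the CoT-SC Hoeffding bounds in each branch, and use the trivial bounds $1$ and $0$ on the complementary parts. Your handling of the boundary case $q_{j^*}=\tfrac12$ is slightly cleaner than the paper's. The paper starts its second sum at $\lceil N_{\rm a}-\tfrac{\ln 2}{\alpha}\rceil$, so when $N_{\rm a}-\tfrac{\ln 2}{\alpha}$ is an integer it counts $j^*$ twice; this is harmless but untidy.
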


\begin{remark}
The proofs of Propositions~\ref{proposition:cot-sc} and~\ref{proposition:mad} are available in Appendix~\ref{appendix:proofs}. In Propositions~\ref{proposition:cot-sc} and~\ref{proposition:mad}, we investigate the capability boundary of CoT-SC and MAD in reasoning. 
In the case of CoT-SC, we can observe that the performance is mainly determined by the number of sampled responses (i.e., $N_{\rm sc}$) and the capability of LLM agents in reasoning (i.e., the probability of LLM agents correctly answering the query $p$). However, in the case of MAD, the performance is explicitly determined by the number of agents (i.e., $N_{\rm a}$) and the probability of LLM agents generating correct answers based on the memories in the previous round (i.e., $e^{-\alpha (N_{\rm a}-j)}$). Since LLMs are vulnerable to erroneous memories, except for the initial debate round, the probability $e^{-\alpha (N_{\rm a}-j)}$ fluctuates according to the number of erroneous memories in the previous round. Meanwhile, the performance is also influenced by the probability distribution of the memories in the previous debate round. In the 2-round MAD case, the distribution is formulated as a binomial distribution: $P(X=j)=\binom{N_{\rm a}}{j}p^j(1-p)^{N_{\rm a}-j}$.
\end{remark}
\vspace{-7pt}

According to Propositions~\ref{proposition:cot-sc} and \ref{proposition:mad}, both results indicate that CoT-SC and MAD rely heavily on the capabilities of agents (i.e., $p$ or $e^{-\alpha (j-N_{\rm a})}$). Specifically, the performance is determined by whether LLM agents are powerful enough to correctly perform reasoning on given tasks. 
Different from CoT-SC, where the capability is inherently determined, the capability of MAD is conditioned on the quality of memories from the last debate round, thereby being particularly vulnerable to erroneous memories. Consider a case that $N_{\rm a}=N_{\rm sc}$. In the case that LLM agents can hardly generate correct reasoning trajectories (i.e., $e^{-\alpha (j-N_{\rm a})}<\frac{1}{2}$), 
the bound of MAD will become tighter (i.e., lower probability of correctly performing reasoning) compared to CoT-SC. Moreover, even for the case that agents can probably generate correct reasoning responses (i.e., $e^{-\alpha (j-N_{\rm a})}\ge\frac{1}{2}$), as $\omega_j$ is smaller than $1$, with the mild assumption that $e^{-\alpha (j-N_{\rm a})}$ does not deviate much from $p$ (since MAD can easily infer the answers in this case), the lower bound of MAD is also tighter than that of CoT-SC. All discussions above imply that CoT-SC is an ideal case of MAD. These observations, to some extent, further explain why MAD generally fails to outperform CoT-SC in reasoning~\citep{huang2023large}.

\vspace{-6pt}
\subsubsection{Further Discussions about the Theoretical Results}
\label{subsubsec:further_discussion}
\vspace{-3pt}
According to the propositions above, in both CoT-SC and MAD, the bounds of the probability that the answer to the given query is correctly inferred are discussed in two different cases. Specifically, when $p<\frac{1}{2}$ or $e^{-\alpha N_{\rm e}}<\frac{1}{2}$, the probability is upper bounded, while the probability is lower bounded when $p\ge\frac{1}{2}$ or $e^{-\alpha N_{\rm e}}\ge\frac{1}{2}$. The two cases implicitly correspond to two types of reasoning cases: \textit{hard problem reasoning} and \textit{easy problem reasoning}. Detailed discussions are provided as follows. 

\textbf{Hard Problem Reasoning.} In the context of hard problem reasoning (HPR) setting (i.e., $p<\frac{1}{2}$ or $e^{-\alpha N_{\rm e}}<\frac{1}{2}$), it assumes that LLM agents can hardly infer the correct answer to the query due to the difficulty of the problem and many erroneous memories in the previous debate round. In CoT-SC, when the capability of LLM agents (i.e., the probability $p$) is fixed, increasing the number of responses deteriorates the performance exponentially. In contrast, when the number of responses (i.e., $N_{\rm sc}$) is fixed, employing more powerful LLM agents ($p\to\frac{1}{2}$) tends to improve the reasoning performance. As a comparison, in MAD, the performance is mainly determined by the number of agents $N_{\rm a}$ and erroneous memories $N_{\rm e}$. In the case of a 2-round MAD, on the one hand, when the number of agents increases (with $N_{\rm e}$ fixed), the performance deteriorates exponentially; on the other hand, with $N_{\rm a}$ fixed, when the number of erroneous memories decreases (i.e., $e^{-\alpha N_{\rm e}}\rightarrow\frac{1}{2}$), the performance of MAD is improved and approaches to the performance of CoT-SC. Thus, the performance of CoT-SC can be viewed as an upper bound of that of MAD in HPR. Both aspects above indicate that increasing the number of samples (i.e., $N_{\rm sc}$ and $N_{\rm a}$) in the HPR setting tends to result in performance collapse. 

\textbf{Easy Problem Reasoning.} Different from the hard problem reasoning setting, LLM agents can easily infer the answer of the query in the context of the easy problem reasoning setting (EPR, i.e., $p\ge\frac{1}{2}$ or $e^{-\alpha N_{\rm e}}\ge\frac{1}{2}$). In this case, for both CoT-SC and MAD frameworks, on the one hand, increasing the number of sampled answers or agents (i.e., $N_{\rm sc}$ or $N_{\rm a}$) helps improve the performance. On the other hand, employing powerful LLM agents (i.e., $p\to1$) or reducing erroneous memories (i.e., $e^{\alpha(j-N_{\rm a})}\to1$) also contributes to improving the performance of the two paradigms in reasoning. 

Thus, with the discussions above, we can summarize two insights. On the one hand, simply increasing sampling numbers (i.e., $N_{\rm a}$) does not improve the performance in all cases, while reducing erroneous memories (i.e., $N_{\rm e}$) consistently improves the performance of MAD. On the other hand, the performance of MAD is also constrained by the reasoning performance in the previous debate round (i.e., $\omega_j$), thereby being theoretically bounded by CoT-SC. With these two observations taken into consideration, we propose to improve the performance of MAD by removing the erroneous memories to remove the constraint above and guarantee the performance of both HPR and EPR being improved. 
\begin{figure*}[t]
	\vskip 0.0in
	\begin{center}
	\centering
    \includegraphics[width=0.95\linewidth]{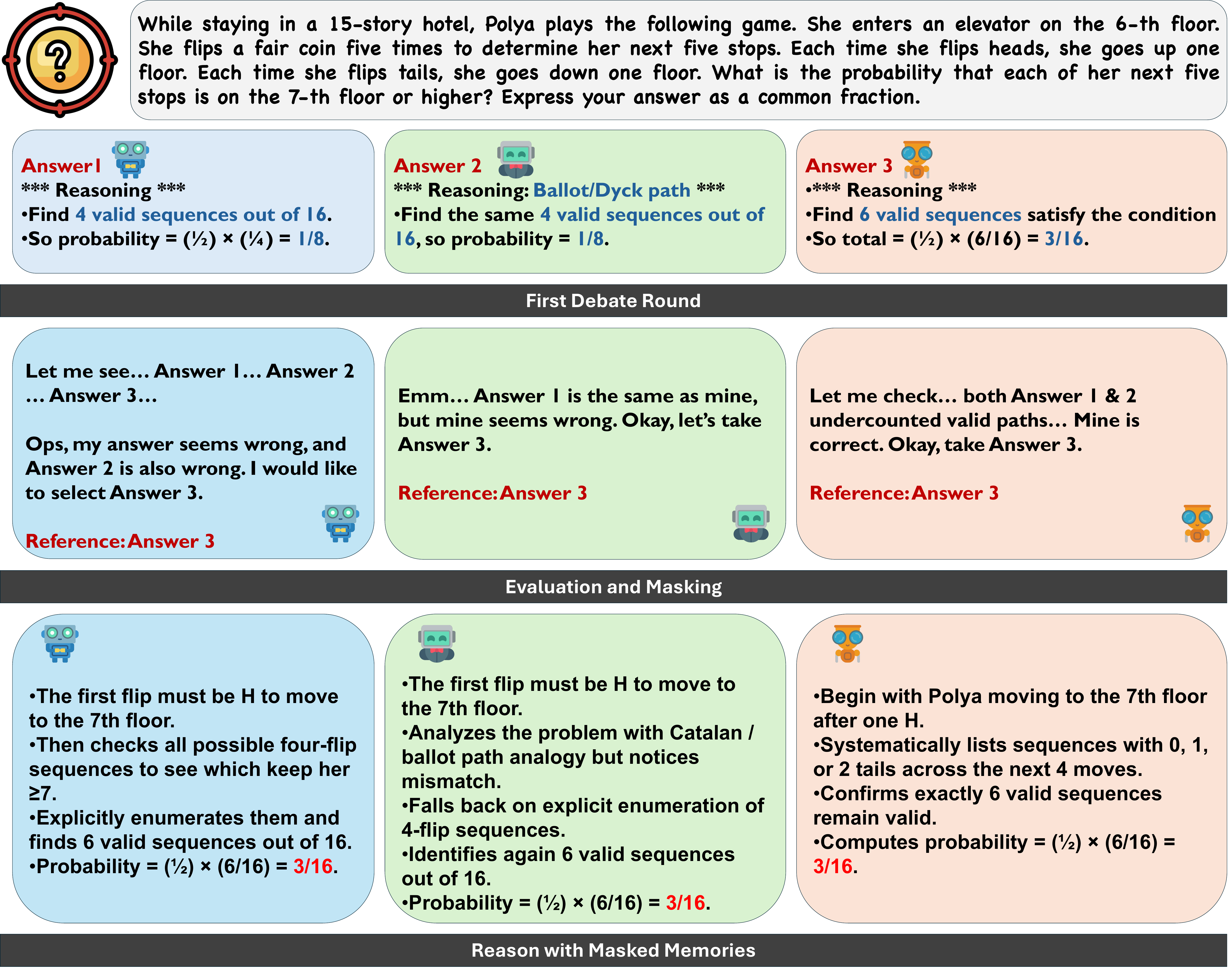}
    \caption{\textbf{An illustration of \methodname\ framework.} In general, \methodname\ mainly includes three steps. (i) In the initial debate round, LLM agents independently generate responses based on the given query. (ii) The responses generated in the previous round are treated as memories. All memories will be critically evaluated and the potential erroneous memories will be masked for the reasoning in the next debate round. (iii) With the preserved memories, agents perform reasoning in the next debate.}
    \label{fig:framework_pipeline}
	\end{center}
	\vskip -0.1in
 \vspace{-1.2em}
\end{figure*}
\vspace{-20pt}
\section{\methodname: Multi-Agent Debate with Memory Masking}
\label{sec:mad_mm}
\vspace{-6pt}
In this section, we propose a simple yet effective multi-agent framework, \textit{multi-agent debate with memory masking} (\methodname), to critically evaluate memories from the last debate round and mask erroneous memories so that reasoning can be performed with correct memories in the next debate.

\vspace{-6pt}
\subsection{Detailed Implementation of \methodname}
\label{subsec:detailed_implementations}
\vspace{-4pt}
An overview of our proposed \methodname\ framework is visualized in Fig.~\ref{fig:framework_pipeline}. In general, the core idea of our proposed \methodname\ framework is inserting critical evaluation and masking operations between two debate rounds to identify and remove the potential erroneous memories for the next debate round. 

\textbf{Step 1. Initial Debate Round.} In the \methodname\ framework, at the initial debate round, all $N_{\rm a}$ LLM agents are required to independently generate responses to the given query $\uprmvar{x}{test}\in \mathcal{X}$ without any contextual information. Specifically, a total of $N_{\rm a}$ responses are independently generated from LLM agents $A_{\theta_i}\in\mathcal{A}$, respectively, and formulate the memory vector of the initial round: $\mathcal{M}_1=[A_{\theta_1}(\uprmvar{x}{test}, \emptyset), A^{\prime}_{\theta_2}(\uprmvar{x}{test}, \emptyset), ..., A_{\theta_{N_{\rm a}}}(\uprmvar{x}{test}, \emptyset)]$, where $A^{\prime}_{\theta_i}(\uprmvar{x}{test}, \emptyset)$ denotes the wrong reasoning responses generated by the $i$-th LLM agent while $A_{\theta_i}(\uprmvar{x}{test}, \emptyset)$ denotes correct reasoning responses.

\textbf{Step 2. Evaluation and Masking.} After a round of debate, a set of memories regarding the given query is obtained. Here, we consider a case where both correct and incorrect reasoning responses are contained. Specifically, the memories can be completely correct, partially correct, or completely wrong. According to our previous demonstration, in the case where erroneous memories are included, the capability of agents generating correct responses in the next debate round will be undermined. To solve this problem, in \methodname, we propose to first critically evaluate the memories from the previous debate and then generate a binary vector $M=\{0, 1\}^{N_{\rm a}}$ to mask the potential erroneous memories identified by the agent. Then, we obtain a set of memories with erroneous ones removed:
\vspace{-2pt}
\begin{equation}
\label{eq:memory_masking}
    \widehat{\mathcal{M}}^{(i)}_r=M^{(i)}\odot\mathcal{M}_r, \text{where}\ M^{(i)}=g^{\rm map}_{A_{\theta_i}}(\mathcal{M}_r),
\vspace{-2pt}
\end{equation}
where $g^{\rm map}_{A_{\theta_i}}: \mathcal{M}_{r}\to\{0, 1\}^{N_{\rm a}}$ denotes an operation that maps the evaluation of the agent $A_{\theta_i}$ on the $r$-th debate memory $\mathcal{M}_r$ into a binary mask vector, $\widehat{\mathcal{M}}^{(i)}_r$ denotes the new set of memories selected by the agent from the memory, and $\odot$ denotes the element-wise multiplication. Since all agents are initialized from the same model, the masking operation can be simply performed once for all agents.

\textbf{Subjective Masking Strategy.} In the subjective masking strategy, memories are masked according to the subjective evaluations of LLM agents. Specifically, in \methodname, LLM agents are required to evaluate memories with the flags ``YES'', ``NO'', and ``NOT SURE''. Depending on the strictness of the predefined filtering rule, the ``NOT SURE'' is treated as either ``YES'' or ``NO'' correspondingly.

\textbf{Objective Masking Strategy.} Inspired by previous work~\citep{fu2025deep}, we also leverage the perplexity of LLMs to perform memory masking. Since high perplexity usually implies that LLMs are not confident of the generated content, answers with high perplexity may probably contain erroneous information (e.g., fallacious content and hallucinations). Thus, as an objective masking strategy, the perplexity of responses is measured, and only the response with the lowest perplexity is preserved.

\textbf{Step 3. Reasoning with Masked Memories.} With the new set of retained memories $\widehat{\mathcal{M}}^{(i)}_{r}$ obtained as contextual information through Eq.~(\ref{eq:memory_masking}), in the $r+1$-th debate round, a set of new reasoning responses $\mathcal{M}_{r+1}=[A^{\prime}_{\theta_1}(\uprmvar{x}{test}, \widehat{\mathcal{M}}^{(1)}_{r}), A_{\theta_2}(\uprmvar{x}{test}, \widehat{\mathcal{M}}^{(2)}_{r}), ..., A_{\theta_{N_{\rm a}}}(\uprmvar{x}{test}, \widehat{\mathcal{M}}^{(N_{\rm a})}_{r})]$, where both correct and incorrect responses are included, is generated from agents with the query and the refined memories. 

In our proposed \methodname\ framework, steps 2 and 3 are conducted iteratively until the end of the debate. At the final round of debate, majority voting is performed to obtain the answer to the query.

\vspace{-6pt}
\subsection{More Discussions}
\label{subsec:dicussion}
\vspace{-4pt}
In the previous section, we introduce a simple yet effective \methodname\ method to improve the robustness of the conventional multi-agent debate framework by removing the potential incorrect memories derived in the last debate round. In this section, we propose to provide more analyses and discussions about the \methodname\ framework to have a comprehensive understanding of its property.

\textbf{Token Consumption Analysis.} Due to the multi-round interactions among agents in the multi-agent debate paradigm, the consumption of tokens of reasoning paradigms has become a significant concern in recent works~\citep{groupdebate,s2_mad}. In our proposed subjective masking strategy, the introduction of the self-evaluation step tends to result in more token consumptions. Thus, in order to quantitatively compare naive MAD and our proposed \methodname\ frameworks, we follow~\cite{groupdebate} to conduct a detailed analysis on the token consumption for both naive MAD and \methodname.

Let's denote the token of the query $\uprmvar{x}{test}$ as $\uprmvar{T}{q}$, the token of the output of the agent $A_{\theta_i}$ at the $r$-th debate round as $\uprmvar{T}{o}_{i, r}$. Then, we formulate the token consumption of both MAD and \methodname as:
\[
\begin{aligned}
    &\uprmvar{N}{token}_{\rm MAD} = N_{\rm a}N_{\rm round}\uprmvar{T}{q}  +  N_{\rm a}\sum_{r=2}^{N_{\rm round}}\sum_{i=1}^{N_{\rm a}}\uprmvar{T}{o}_{r-1, i}+\sum_{r=1}^{N_{\rm round}}\sum_{i=1}^{N_{\rm a}}\uprmvar{T}{o}_{r, i},\\
    &\uprmvar{N}{token}_{\rm MAD-M^2}\leq N_{\rm a}N_{\rm round}\uprmvar{T}{q}  +  2N_{\rm a}\sum_{r=2}^{N_{\rm round}}\sum_{i=1}^{N_{\rm a}}\uprmvar{T}{o}_{r-1, i}+\sum_{r=1}^{N_{\rm round}}\sum_{i=1}^{N_{\rm a}}\uprmvar{T}{o}_{r, i}.
\end{aligned}
\]
According to the results above, compared to the conventional MAD framework, our proposed \methodname\ consumes more tokens. Specifically, even in the worst case, where all memories are preserved, \methodname\ consumes $N_{\rm a}\sum_{r=2}^{N_{\rm round}}\sum_{i=1}^{N_{\rm a}}\uprmvar{T}{o}_{r-1, i}$ more input tokens than the naive MAD framework.

\textbf{Comparison to Existing Works.} In literature, many works have explored memory selection in the multi-agent debate framework~\citep{groupdebate,li2024improving,s2_mad}. Specifically, \cite{li2024improving} proposes Sparse MAD (S-MAD) to organize agents in MAD as a graph. One agent can access the response of the other only if the two agents are connected. Moreover, \cite{s2_mad} proposes Selective Sparse MAD (S$^2$-MAD) to reduce the exchange of trivial memories and unproductive discussions among agents. Among these works, the most related work to our proposed \methodname\ is S-MAD. However, S-MAD only considers static (predefined) topologies for sparse communications among agents. Although dynamic topology is also mentioned in the paper, the memories are selected in a random way. However, in our proposed \methodname\ framework, the memories selected from the last debate round and the sparsity of communications are dynamically determined by LLM agents with a cost of more token consumption or the internal states (e.g., perplexity) regarding the memories.

\begin{table}[t]
\caption{Empirical results of accuracy (with standard deviation) and token consumption ($T.$). We evaluate four mainstream open-source LLMs on both mathematical reasoning and language understanding benchmarks. We highlight the best performance in \textbf{bold}, and the second-best performance in \uline{underline}. For fairness, all results are the average of five trials on different seeds (i.e., 41-45).}
    \vspace{-8pt}
	\label{tab:main_results}
	\begin{center}
		\begin{small}
		\setlength\tabcolsep{1.6pt}
			\begin{threeparttable}
				\begin{tabular}{lcccccccccc}
					\toprule
                    \multirow{2}{*}{\bf Methods} &
                    \multicolumn{2}{c}{\bf AIME24} &
                    \multicolumn{2}{c}{\bf AIME25} &
                    \multicolumn{2}{c}{\bf MMLU\_Pro} &
                    \multicolumn{2}{c}{\bf MATH}&
                    \multicolumn{2}{c}{\bf GSM8K}\\
                    \cmidrule{2-11}
                    & Acc. (\%) $\uparrow$ & $T$. $\downarrow$ & Acc. (\%) $\uparrow$ & $T$. $\downarrow$ & Acc. (\%) $\uparrow$ & $T$. $\downarrow$ & Acc. (\%) $\uparrow$ & $T$. $\downarrow$ & Acc. (\%) $\uparrow$ & $T$. $\downarrow$\\
                    \midrule
                    \rowcolor{Gray}
                    \multicolumn{11}{c}{\texttt{Qwen2.5-7B-Instruct}} \\
                    \midrule
                    CoT & 3.3 & $\times$0.07  & 3.3 & $\times$0.08 & 36.0$\pm$6.3 & $\times$0.08 & 49.2$\pm$3.8 & $\times$0.07 & 64.8$\pm$5.4 & $\times$0.09\\
                    CoT-SC &10.0 & $\times$0.43 & \textbf{10.0} & $\times$0.45 & 39.2$\pm$4.0 & $\times$0.51 & \textbf{58.0$\pm$1.0} & $\times$0.45 & 83.6$\pm$3.9 & $\times$0.51\\
                    MAD & \textbf{13.3} & $\times$1.00 & \uline{6.7} & $\times$1.00 & \uline{43.0$\pm$2.2} & $\times$1.00 & 55.6$\pm$3.7 & $\times$1.00 & \textbf{91.8$\pm$2.4} & $\times$1.00\\
                    \rowcolor{gray!15}\methodname (S) & \textbf{13.3} & $\times$1.13 & 3.3 & $\times$1.21 & \textbf{43.6$\pm$2.3} & $\times$1.17 & \uline{56.8$\pm$2.1} & $\times$1.20 & \uline{89.0$\pm$4.0} & $\times$1.25\\
                    \rowcolor{gray!15}\methodname (O) & 6.7  & $\times$0.68 & \uline{6.7} & $\times$0.65 & 42.4$\pm$5.3 & $\times$0.69 & 54.2$\pm$2.6 & $\times$0.67 & \uline{89.0$\pm$2.0} & $\times$0.72\\
                    \midrule
                    \rowcolor{Gray}
                    \multicolumn{11}{c}{\texttt{Qwen2.5-Math-7B-Instruct}} \\
                    \midrule
                    CoT & \uline{13.3} & $\times$0.08 & \uline{10.0} & $\times$0.08 & 39.6$\pm$0.9 & $\times$0.08 & 77.8$\pm$5.2 & $\times$0.08 & 95.2$\pm$1.6 &$\times$0.08\\
                    CoT-SC & \textbf{23.3} & $\times$0.53 & \uline{10.0} & $\times$0.48  & \textbf{41.4$\pm$5.1} & $\times$0.47 & \textbf{82.0$\pm$4.7} & $\times$0.44 & \textbf{96.4$\pm$1.7}  &$\times$0.45\\
                    MAD & 6.7 &$\times$1.00  & 6.7 & $\times$1.00 & 34.2$\pm$2.9 & $\times$1.00 & 71.2$\pm$3.3 & $\times$1.00 & 95.2$\pm$1.8 &$\times$1.00\\
                    \rowcolor{gray!15}\methodname (S) & 6.7  & $\times$1.37 & 6.7 & $\times$1.37 & 35.0$\pm$2.2 & $\times$1.36 & 71.2$\pm$3.3 & $\times$1.41 & 95.2$\pm$1.8 &$\times$1.44\\
                    \rowcolor{gray!15}\methodname (O) & \uline{13.3}  & $\times$0.67 & \textbf{13.3} & $\times$0.62 & \uline{37.0$\pm$2.9} & $\times$0.62 & \uline{80.2$\pm$3.8} & $\times$0.62 & \uline{95.4$\pm$1.7}  &$\times$0.60\\
                    \midrule
                    \rowcolor{Gray}
                    \multicolumn{11}{c}{\texttt{DeepSeek-Math-7B-Instruct}} \\
                    \midrule
                    CoT & 0.0 & $\times$0.07 & 0.0 & $\times$0.09 & 27.8$\pm$7.9 & $\times$0.17 & 34.2$\pm$4.5 & $\times$0.08 & 79.0$\pm$3.8 & $\times$0.09\\
                    CoT-SC & \textbf{3.3} & $\times$0.44 & 0.0 & $\times$0.46 & \textbf{32.2$\pm$4.3} & $\times$0.99 & \textbf{44.4$\pm$3.9} & $\times$0.47 & \textbf{88.8$\pm$2.6} & $\times$0.52\\
                    MAD & 0.0  &$\times$1.00 & 0.0 &$\times$1.00 & \uline{31.2$\pm$5.4} &$\times$1.00 & 38.6$\pm$2.6 & $\times$1.00 & 81.2$\pm$2.7 & $\times$1.00\\
                    \rowcolor{gray!15}\methodname (S) & 0.0  &$\times$1.32 & 0.0 &$\times$1.30 & 30.8$\pm$5.2 &$\times$1.66 & 37.0$\pm$5.1 &$\times$1.31 & 80.8$\pm$3.5 & $\times$1.33\\
                    \rowcolor{gray!15}\methodname (O) & 0.0  & $\times$0.67 & 0.0 &$\times$0.67 & 30.8$\pm$6.4 &$\times$0.75 & \uline{39.8$\pm$3.6} &$\times$0.68 & \uline{82.2$\pm$4.4} & $\times$0.71\\
                    \midrule
                    \rowcolor{Gray}
                    \multicolumn{11}{c}{\texttt{QwQ-32B}} \\
                    \midrule
                    CoT & \textbf{80.0} & $\times$0.13 & 56.7 & $\times$0.14 & 75.2$\pm$4.9 & $\times$0.11 & 80.8$\pm$1.6 & $\times$0.10 & \uline{97.4$\pm$2.3} &$\times$0.08\\
                    CoT-SC & \textbf{80.0} & $\times$0.85 & \textbf{80.0} & $\times$0.85 & \textbf{76.4$\pm$6.8} & $\times$0.63 & \textbf{81.6$\pm$0.9} & $\times$0.61 & \uline{97.4$\pm$2.3}   &$\times$0.44\\
                    MAD & 76.7  &$\times$1.00 & 73.3 &$\times$1.00 & 75.4$\pm$4.2 &$\times$1.00 & 79.2$\pm$2.8 & $\times$1.00   & 97.2$\pm$2.3  &$\times$1.00\\
                    \rowcolor{gray!15}\methodname (S) & 76.7  &$\times$1.28 & 73.3 &$\times$1.25 & \uline{75.8$\pm$6.3} &$\times$1.22 & \uline{79.6$\pm$2.3} &$\times$1.27  & \textbf{97.8$\pm$1.9} &$\times$1.32\\
                    \rowcolor{gray!15}\methodname (O) & \textbf{80.0}  &$\times$0.67 & \uline{76.7} &$\times$0.90 & 75.2$\pm$5.9 &$\times$0.69 & 75.0$\pm$3.9 &$\times$0.67 & 96.6$\pm$1.8  &$\times$0.56\\
                    \bottomrule
				\end{tabular}
			\end{threeparttable}
		\end{small}
	\end{center}
	\vskip -0.25in
\end{table}

\vspace{-8pt}
\section{Experiments}
\label{sec:experiments}
\vspace{-6pt}
In this section, we evaluate our proposed \methodname\ on both mathematical reasoning and language understanding benchmarks. We first briefly introduce the experimental settings. Then, we provide detailed quantitative results and further analyses to validate the efficacy and obtain a comprehensive understanding of \methodname. Complete implementations and settings are available in Appendix~\ref{appendix:complete_exp_settings}.

\vspace{-8pt}
\subsection{Experimental Setups}
\label{subsec:setups}
\vspace{-4pt}
\textbf{Models.} In our experiments, we mainly consider evaluating \methodname\ equipped with four mainstream open-source large language models: Qwen2.5-7B-Instruct~\citep{qwen2.5}, Qwen2.5-Math-7B-Instruct~\citep{qwen25math}, DeepSeek-Math-7B-Instruct~\citep{shao2024deepseekmath}, and QwQ-32B~\citep{qwq32b}. Compared to Qwen2.5-7B-Instruct, other models are more powerful in math reasoning tasks. 


\textbf{Benchmarks.} In our experiments, we evaluate \methodname\ on both mathematical reasoning benchmarks (i.e., GSM8K~\citep{gsm8k_dataset}, MATH~\citep{math_dataset}, AIME24\&25) and language understanding benchmarks (i.e., MMLU\_Pro~\citep{wang2024mmlu}). In this case, GSM8K, MATH and MMLU\_Pro are treated as easy reasoning tasks, while AIME24\&25 are the hard reasoning tasks.

\textbf{Baselines.} 
In our experiments, the following frameworks are adopted as baselines: (1) Chain-of-Thought (CoT)~\citep{cot}; (2) Self-Consistency Chain-of-Thoughts (CoT-SC)~\citep{cot_sc} with 6 independent reasoning paths; and (3) Multi-Agent Debate (MAD)~\citep{mad_yilun}. For all MAD frameworks above, the number of agents and debate rounds are set to 3 and 2, respectively.

\begin{figure}[t]
	\vskip 0.0in
	\begin{center}
	\centering
    \subfigure[Qwen2.5-7B (S)\label{fig:qwen2.5-7b_answer_check_strict}]{
			\begin{minipage}[t]{0.239\linewidth}
				\centering
				\includegraphics[width=1.0\linewidth]{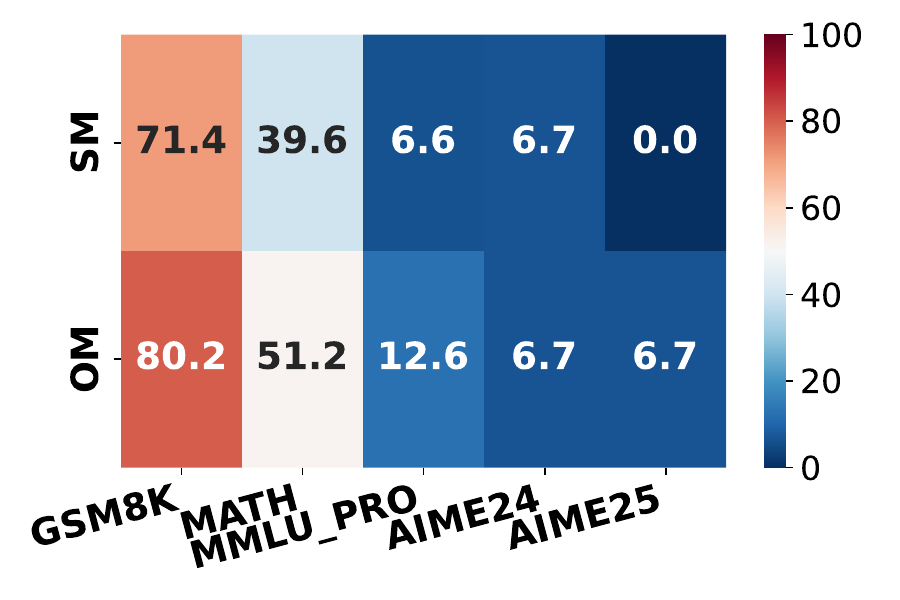}
		\end{minipage}}\vspace{-0.1cm}
    \subfigure[Qwen2.5-7B (L)\label{fig:qwen2.5-7b_answer_check_loose}]{
			\begin{minipage}[t]{0.239\linewidth}
				\centering
				\includegraphics[width=1.0\linewidth]{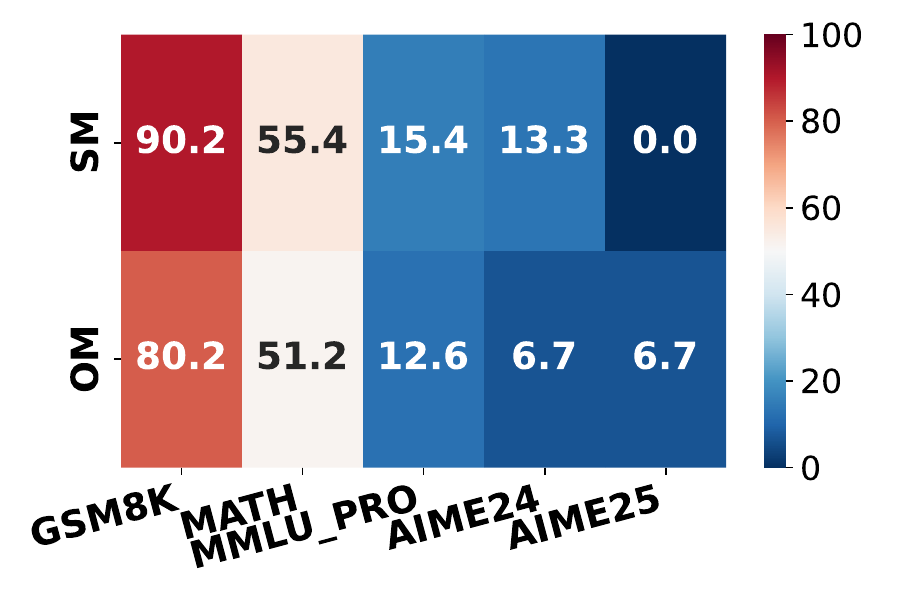}
		\end{minipage}}\vspace{-0.1cm}
    \subfigure[Qwen2.5-Math-7B (S)\label{fig:qwen2.5-math-7b_answer_check_strict}]{
			\begin{minipage}[t]{0.239\linewidth}
				\centering
				\includegraphics[width=1.0\linewidth]{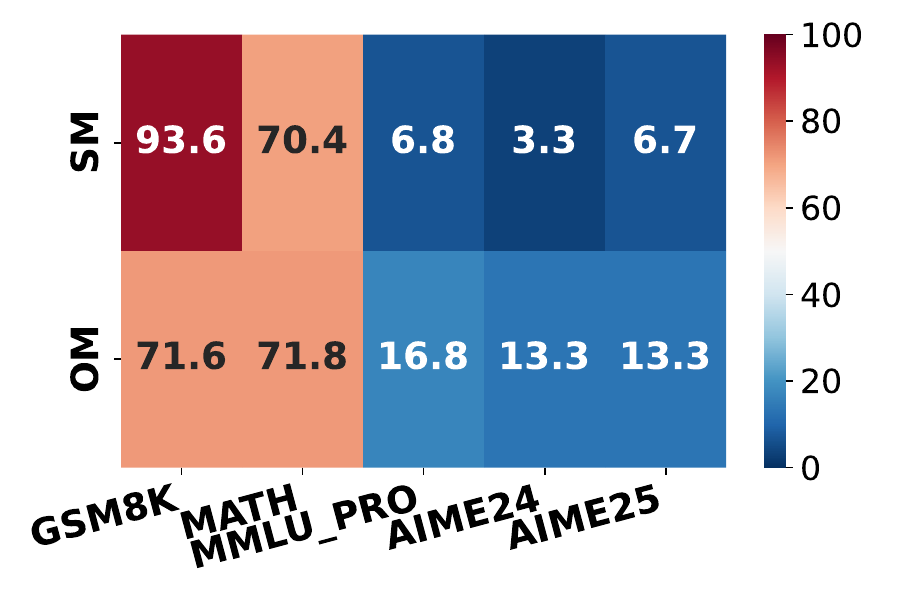}
		\end{minipage}}\vspace{-0.1cm}	
    \subfigure[Qwen2.5-Math-7B (L)\label{fig:qwen2.5-math-7b_answer_check_loose}]{
			\begin{minipage}[t]{0.239\linewidth}
				\centering
				\includegraphics[width=1.0\linewidth]{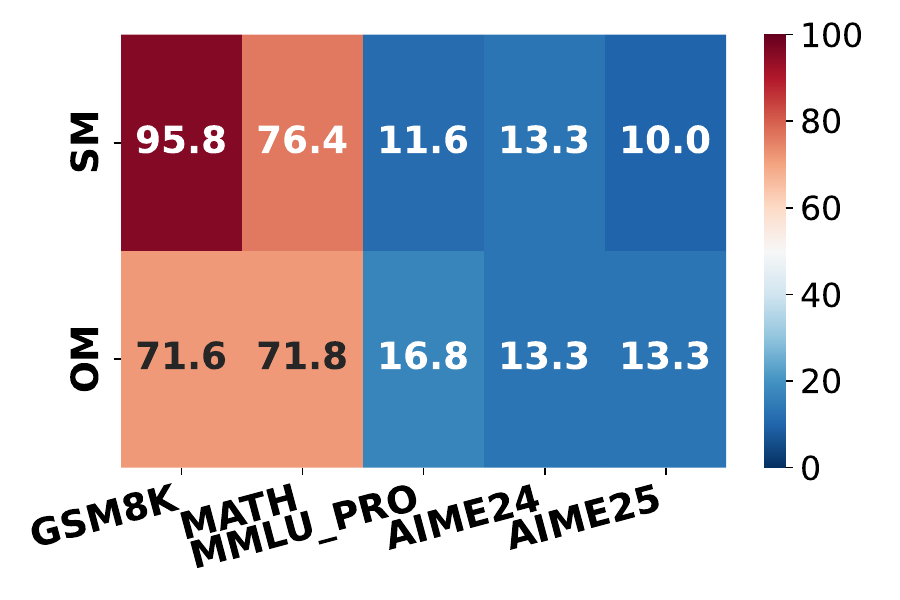}
		\end{minipage}}\vspace{-0.1cm}
    \caption{\textbf{Visualization of erroneous memory identification of different LLMs.} We here examine the erroneous memory identification capability of different LLMs. ``S'' denotes the strict rule, and ``L'' denotes the loose rule. According to the results, the objective masking strategy generally works better on the powerful LLMs, while the subjective masking works better on relatively weak LLMs.}
    \label{fig:answer_check_main}
    \end{center}
    \vskip -0.2in
    \vspace{-12pt}
\end{figure}

\begin{figure}[t]
	\vskip 0.0in
	\begin{center}
	\centering
    \subfigure[GSM8K\label{fig:gsm8k_comparison_agentscale_mad_madmm_qwen2.5-7b}]{
			\begin{minipage}[t]{0.188\linewidth}
				\centering
				\includegraphics[width=1.0\linewidth]{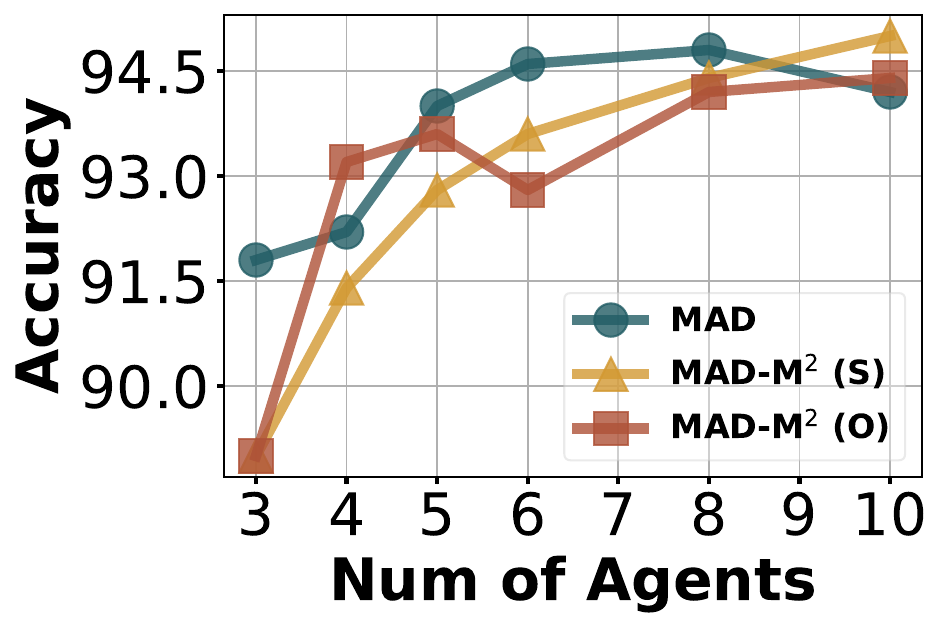}
		\end{minipage}}\vspace{-0.08cm}
    \subfigure[MATH\label{fig:math_comparison_agentscale_mad_madmm_qwen2.5-7b}]{
			\begin{minipage}[t]{0.188\linewidth}
				\centering
				\includegraphics[width=1.0\linewidth]{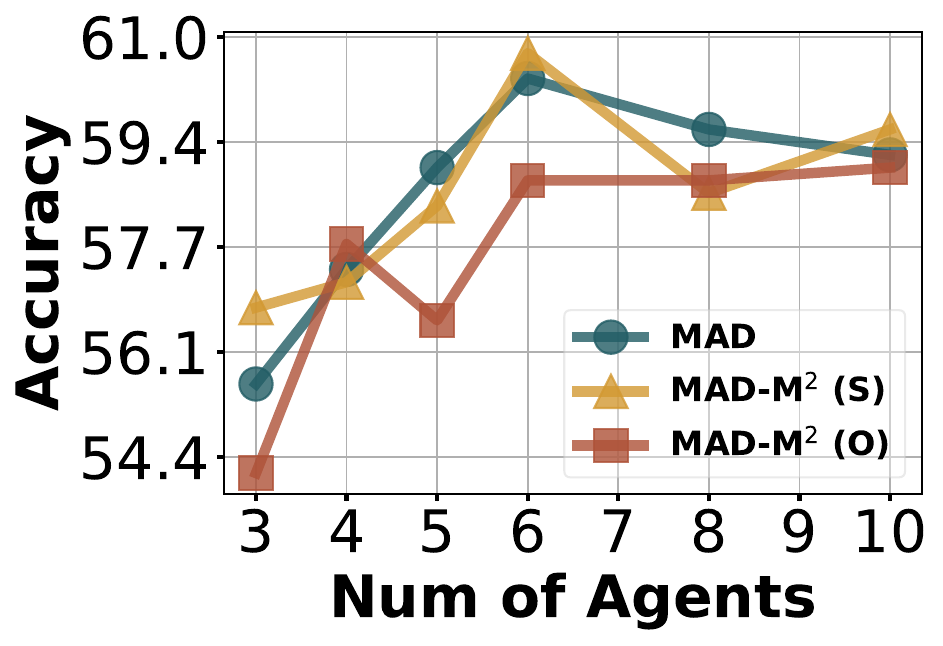}
		\end{minipage}}\vspace{-0.08cm}
    \subfigure[MMLU\_Pro\label{fig:mmlu_pro_comparison_agentscale_mad_madmm_qwen2.5-7b}]{
			\begin{minipage}[t]{0.188\linewidth}
				\centering
				\includegraphics[width=1.0\linewidth]{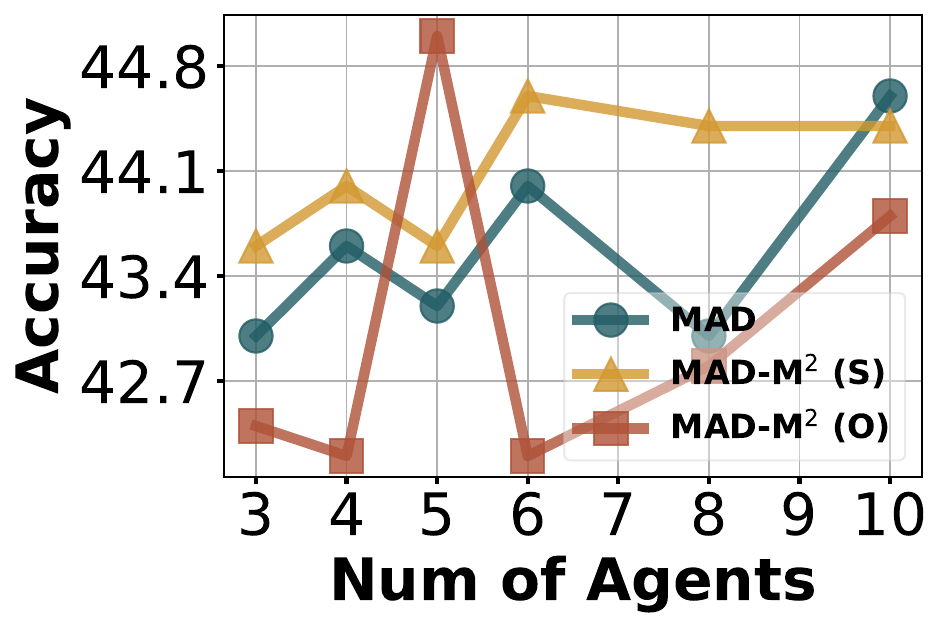}
		\end{minipage}}\vspace{-0.08cm}
    \subfigure[AIME24\label{fig:aime24_comparison_agentscale_mad_madmm_qwen2.5-7b}]{
			\begin{minipage}[t]{0.188\linewidth}
				\centering
				\includegraphics[width=1.0\linewidth]{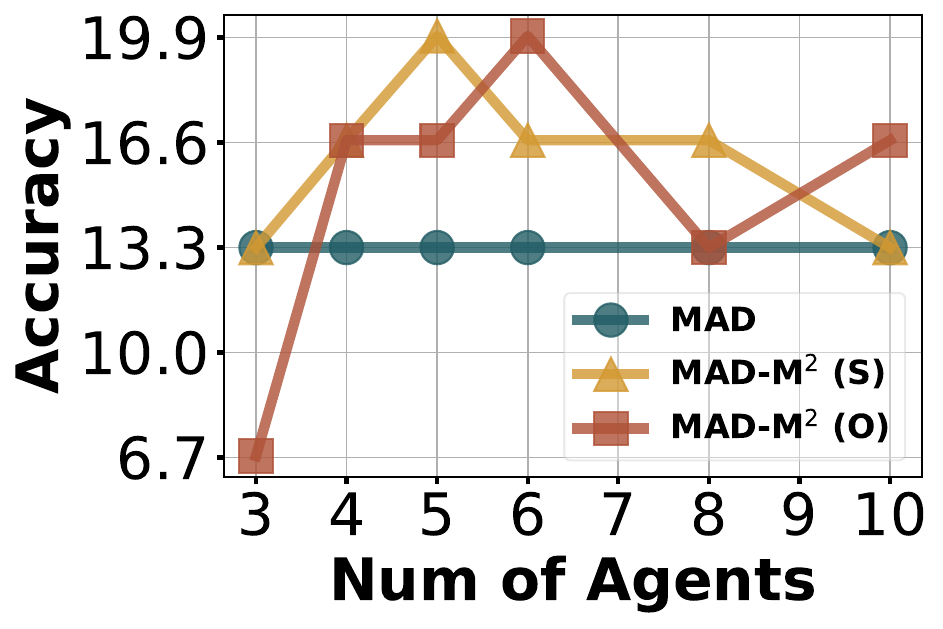}
		\end{minipage}}\vspace{-0.08cm}	
    \subfigure[AIME25\label{fig:aime25_comparison_agentscale_mad_madmm_qwen2.5-7b}]{
			\begin{minipage}[t]{0.188\linewidth}
				\centering
				\includegraphics[width=1.0\linewidth]{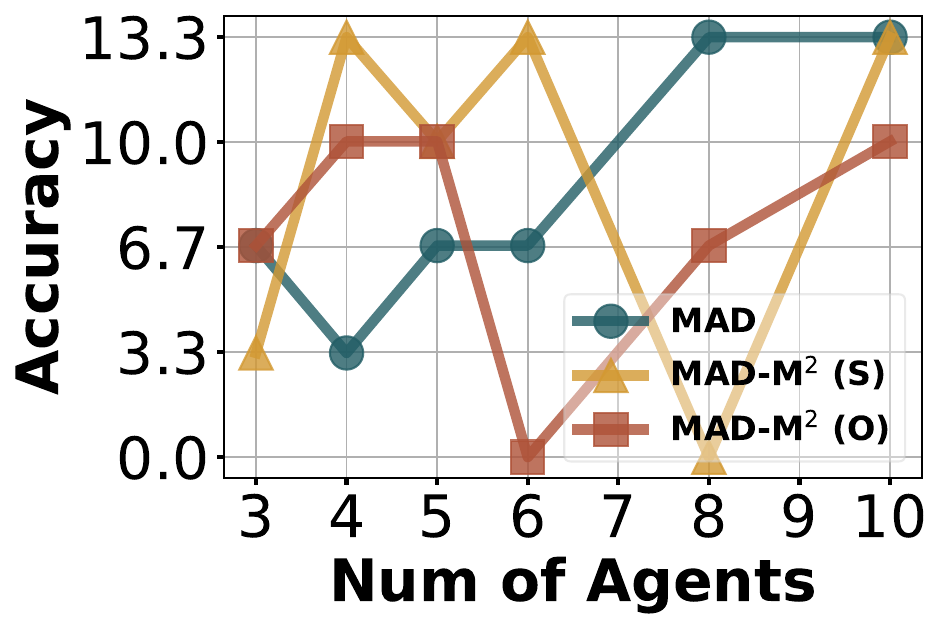}
		\end{minipage}}\vspace{-0.08cm}
    \caption{\textbf{Effect of scaling the number of agents in the case of Qwen2.5-7B-Instruct.} The number of agents is increased from 3 to 10. According to the figures, both frameworks benefit from the increase of the number of agents and \methodname (S) tends to achieve better performance in most cases.}
    \label{fig:qwen2.5-7b_agent_scaling}
    \end{center}
    \vskip -0.2in
    \vspace{-4pt}
\end{figure}

\begin{figure}[t]
	\vskip 0.0in
	\begin{center}
	\centering
    \subfigure[GSM8K\label{fig:gsm8k_comparison_agentscale_mad_madmm_deepseek-math-7b}]{
			\begin{minipage}[t]{0.189\linewidth}
				\centering
				\includegraphics[width=1.0\linewidth]{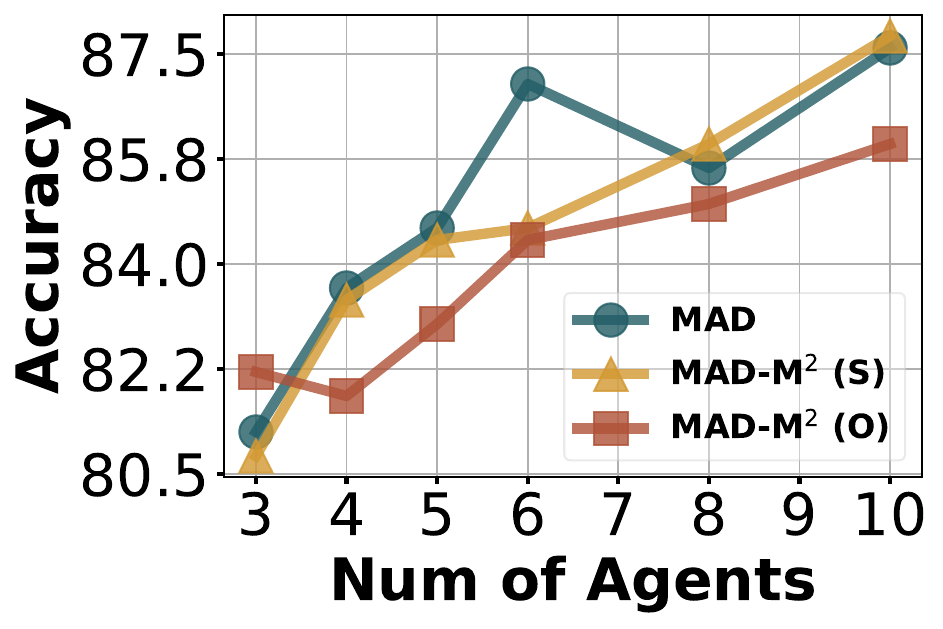}
		\end{minipage}}\vspace{-0.08cm}
    \subfigure[MATH\label{fig:math_comparison_agentscale_mad_madmm_deepseek-math-7b}]{
			\begin{minipage}[t]{0.189\linewidth}
				\centering
				\includegraphics[width=1.0\linewidth]{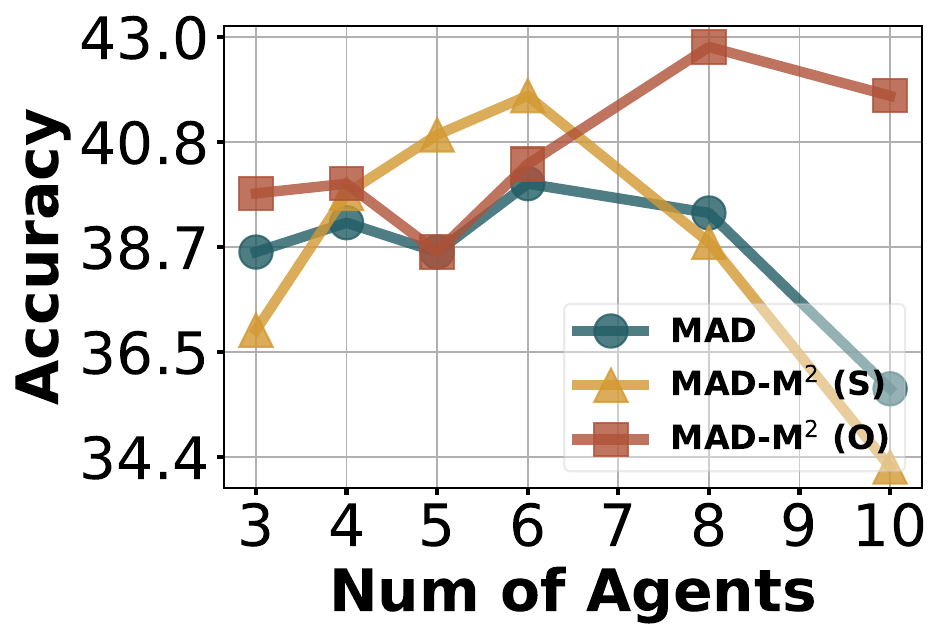}
		\end{minipage}}\vspace{-0.08cm}
    \subfigure[MMLU\_Pro\label{fig:mmlu_pro_comparison_agentscale_mad_madmm_deepseek-math-7b}]{
			\begin{minipage}[t]{0.189\linewidth}
				\centering
				\includegraphics[width=1.0\linewidth]{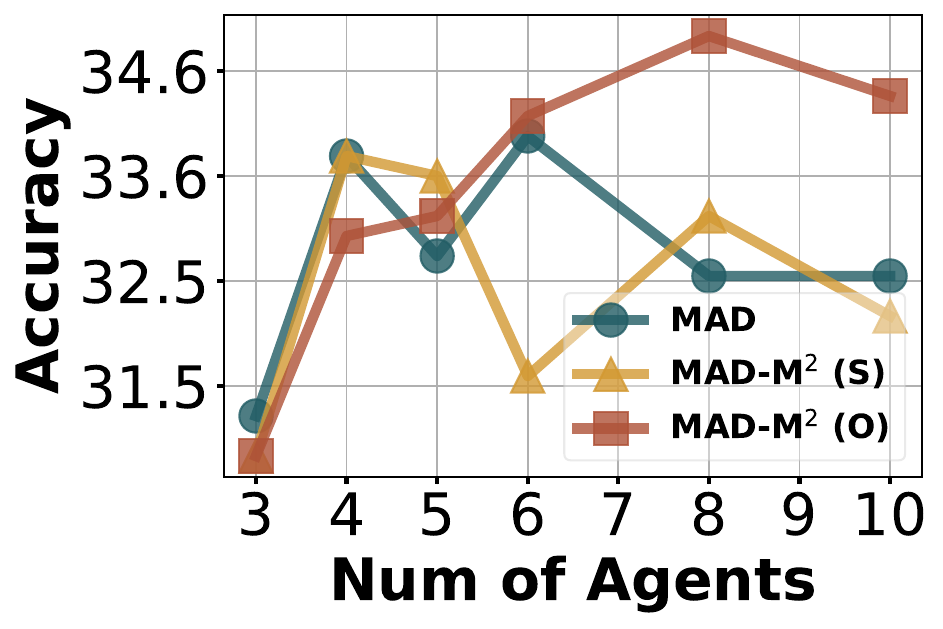}
		\end{minipage}}\vspace{-0.08cm}
    \subfigure[AIME24\label{fig:aime24_comparison_agentscale_mad_madmm_deepseek-math-7b}]{
			\begin{minipage}[t]{0.189\linewidth}
				\centering
				\includegraphics[width=1.0\linewidth]{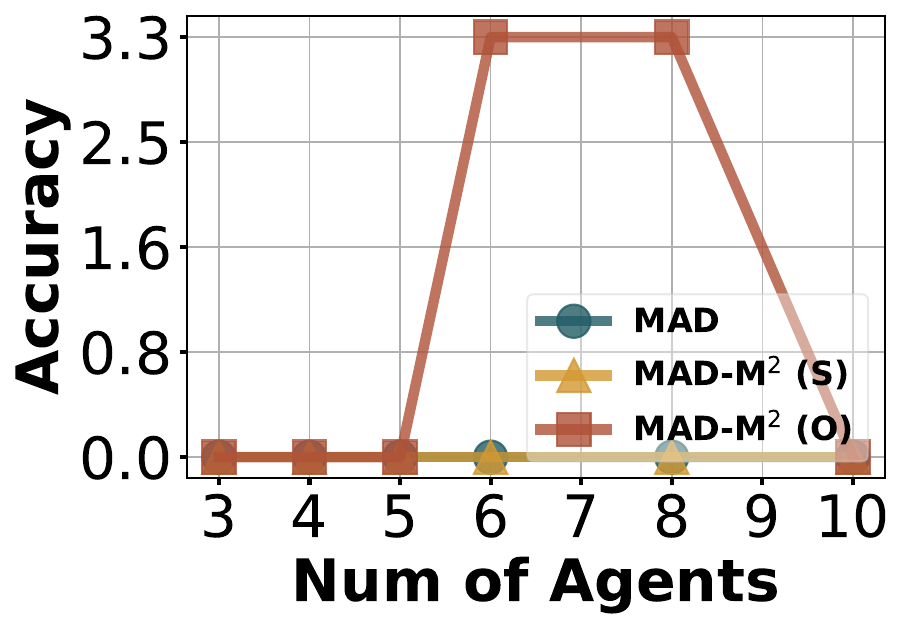}
		\end{minipage}}\vspace{-0.08cm}	
    \subfigure[AIME25\label{fig:aime25_comparison_agentscale_mad_madmm_deepseek-math-7b}]{
			\begin{minipage}[t]{0.189\linewidth}
				\centering
				\includegraphics[width=1.0\linewidth]{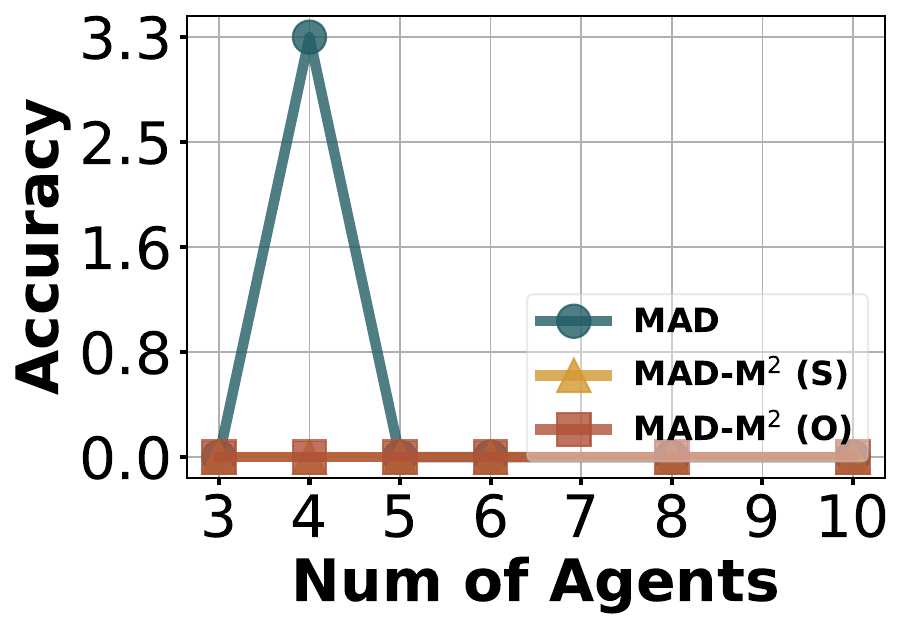}
		\end{minipage}}\vspace{-0.08cm}
    \caption{\textbf{Effect of scaling the number of agents in the case of DeepSeek-Math-7B-Instruct.} The number of agents is increased from 3 to 10. According to the figures, both frameworks act differently when the number of agents increases and \methodname (O) achieves better performance in most cases.}
    \label{fig:deepseek-math-7b_agent_scaling}
    \end{center}
    \vskip -0.2in
    \vspace{-10pt}
\end{figure}

\vspace{-10pt}
\subsection{Main Results}
\label{subsec:main_results}
\vspace{-4pt}
We evaluate \methodname\ with both subjective (\methodname (S)) and objective (\methodname (O)) masking strategies on both mathematical reasoning and language understanding benchmarks. The quantitative results, including accuracy (with standard deviation) and token consumption, are reported in Table~\ref{tab:main_results}.

\textbf{\textit{Observation 1.} \methodname outperforms MAD in most cases.} As shown in Table~\ref{tab:main_results}, we can observe that \methodname\ achieves better performance than MAD in most cases with different LLMs. Specifically, in the case of Qwen2.5-7B-Instruct, \methodname (S) achieves $0.6\%$ and $1.2\%$ improvements on MMLU\_Pro and MATH benchmarks, respectively. Moreover, in the case of Qwen2.5-Math-7B-Instruct, \methodname (O) achieves $2.8\%$, $9.0\%$, and $0.2\%$ on MMLU\_Pro, MATH, and GSM8K benchmarks. Meanwhile, \methodname (O) also achieves $6.6\%$ improvements on both AIME24\&25.

\textbf{\textit{Observation 2.} MAD with powerful LLMs tends to achieve better performance on reasoning tasks.} In Table~\ref{tab:main_results}, MAD with weak LLMs, such as Qwen2.5-7B-Instruct, achieves better performance on easier tasks (e.g., MATH and MMLU\_Pro), while MAD with powerful LLMs, such as Qwen2.5-Math-7B and QwQ-32B, achieves better performance on both easy and hard benchmarks simultaneously. This phenomenon aligns with our theoretical analyses that MAD benefits from the powerful reasoning capability. Masking erroneous memories helps improve the reasoning capability. 

\textbf{\textit{Observation 3.} The efficacy of masking strategies is highly dependent on the intrinsic capability of LLMs.}
As shown in Table~\ref{tab:main_results}, in the case of MAD with weak LLMs (e.g., Qwen2.5-7B-Instruct), \methodname (S) generally achieves better performance than \methodname (O). However, in the case of MAD with powerful LLMs (e.g., Qwen2.5-Math-7B-Instruct \& QwQ-32B), \methodname (O) demonstrates its advantage on more challenging AIME benchmarks. Specifically, \methodname (O) achieves significantly better performance than \methodname (S) with about 50\% tokens. This indicates that the perplexity-based masking strategy provides a more precise signal for erroneous memory identification in reasoning.

\vspace{-8pt}
\subsection{More Analyses}
\label{subsec:analyses}
\vspace{-4pt}


\textbf{Token Consumption.} We here take the conventional MAD framework as the baseline and compare the token consumptions of all reasoning frameworks in Table~\ref{tab:main_results}. According to the results, multi-agent reasoning paradigms typically consume more tokens than single-agent paradigms (i.e., CoT \& CoT-SC) due to multi-round interactions (i.e., inputs and outputs) and memories in prompts. Consistent with the analysis results in Section~\ref{subsec:dicussion}, subjective masking consumes more tokens (13\%$\sim$41\%) due to the extra subjective evaluation phase. In contrast, when integrating objective masking with \methodname, the token consumption decreases by about 30\% in almost all cases due to fewer memories in prompts. 

\textbf{Erroneous Memory Identification.} To figure out whether \methodname\ can correctly identify those erroneous memories, we consider two different rules here: (1) \textit{Strict rule}: All erroneous memories are identified. (2) \textit{Loose rule}: Correct memories are dominant in the retained memories. 
The results are visualized in Fig.~\ref{fig:answer_check_main}. Note that the strict rule does not perfectly fit objective masking since only one answer is preserved when performing objective masking. According to the results, on the one hand, we can observe that \methodname\ consistently achieves good performance on simple reasoning tasks (e.g., GSM8K \& MATH). On the other hand, subjective masking generally achieves better performance with the loose rule, where correct memories take a dominant role in preserved memories. In addition, \methodname\ with weak LLMs prefers subjective masking (cf. Figs.~\ref{fig:qwen2.5-7b_answer_check_strict} \& \ref{fig:qwen2.5-7b_answer_check_loose}) while \methodname\ with powerful LLMs prefers objective masking, in particular on hard reasoning tasks (cf. Figs.~\ref{fig:qwen2.5-math-7b_answer_check_strict} \& \ref{fig:qwen2.5-math-7b_answer_check_loose}). Generally, objective masking is more robust as the difficulty of tasks increases.

\begin{figure}[t]
	\vskip 0.0in
	\begin{center}
	\centering
    \subfigure[GSM8K\label{fig:gsm8k_comparison_roundscale_mad_madmm_qwen2.5-7b}]{
			\begin{minipage}[t]{0.189\linewidth}
				\centering
				\includegraphics[width=1.0\linewidth]{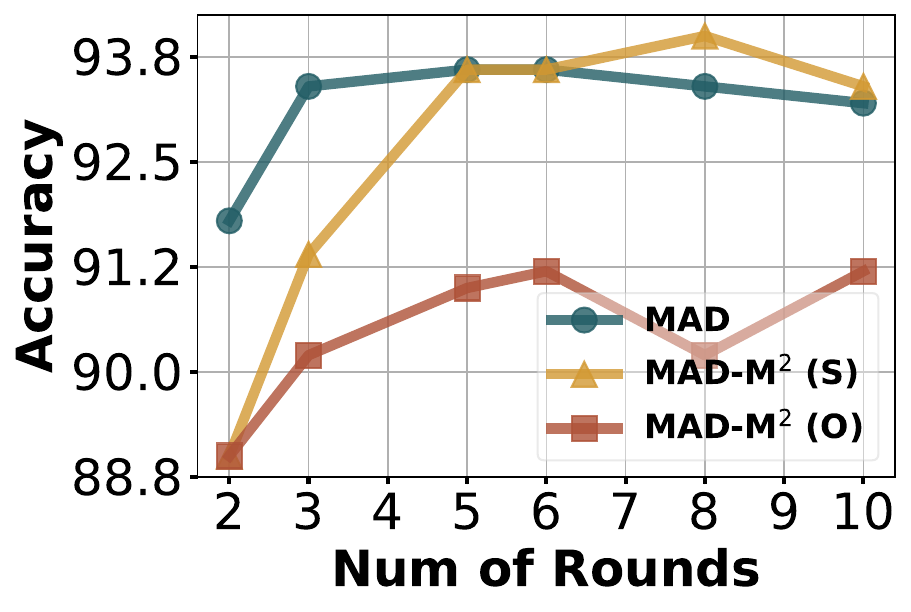}
		\end{minipage}}\vspace{-0.07cm}
    \subfigure[MATH\label{fig:math_comparison_roundscale_mad_madmm_qwen2.5-7b}]{
			\begin{minipage}[t]{0.189\linewidth}
				\centering
				\includegraphics[width=1.0\linewidth]{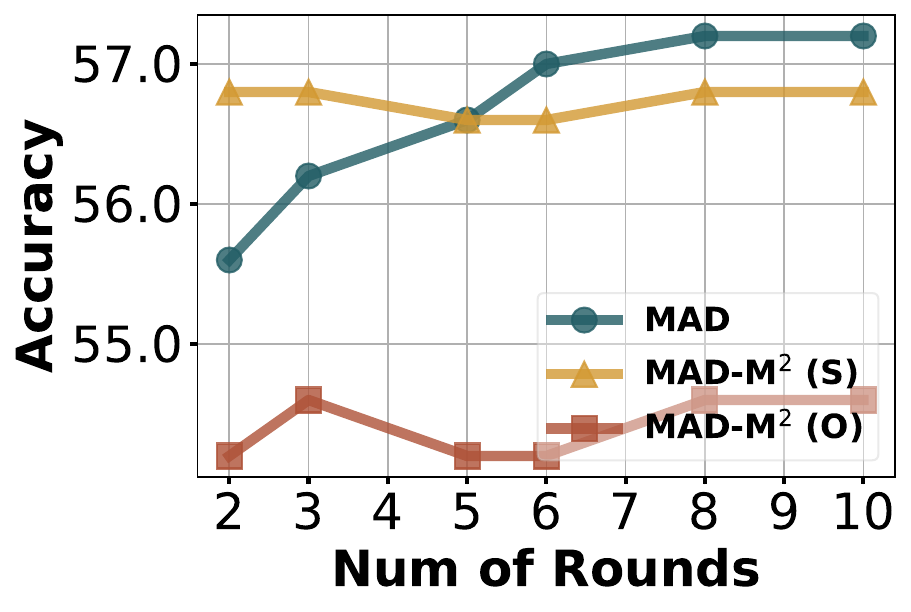}
		\end{minipage}}\vspace{-0.07cm}
    \subfigure[MMLU\_Pro\label{fig:mmlu_pro_comparison_roundscale_mad_madmm_qwen2.5-7b}]{
			\begin{minipage}[t]{0.189\linewidth}
				\centering
				\includegraphics[width=1.0\linewidth]{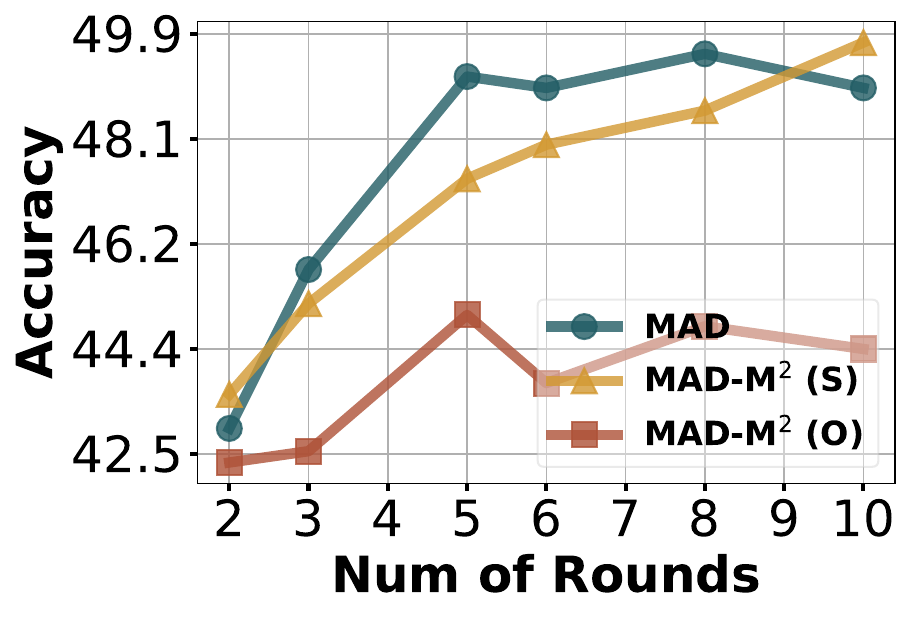}
		\end{minipage}}\vspace{-0.07cm}
    \subfigure[AIME24\label{fig:aime24_comparison_roundscale_mad_madmm_qwen2.5-7b}]{
			\begin{minipage}[t]{0.189\linewidth}
				\centering
				\includegraphics[width=1.0\linewidth]{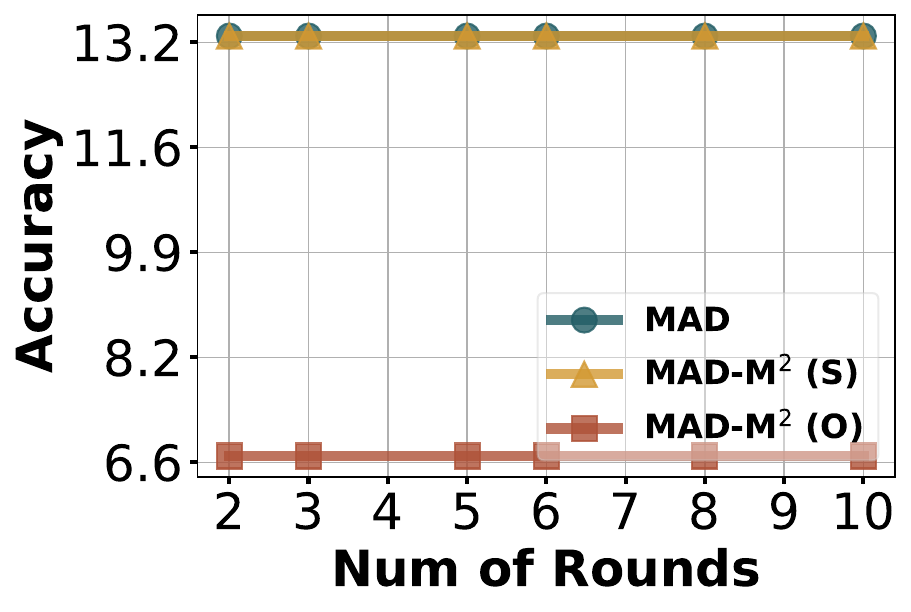}
		\end{minipage}}\vspace{-0.07cm}	
    \subfigure[AIME25\label{fig:aime25_comparison_roundscale_mad_madmm_qwen2.5-7b}]{
			\begin{minipage}[t]{0.189\linewidth}
				\centering
				\includegraphics[width=1.0\linewidth]{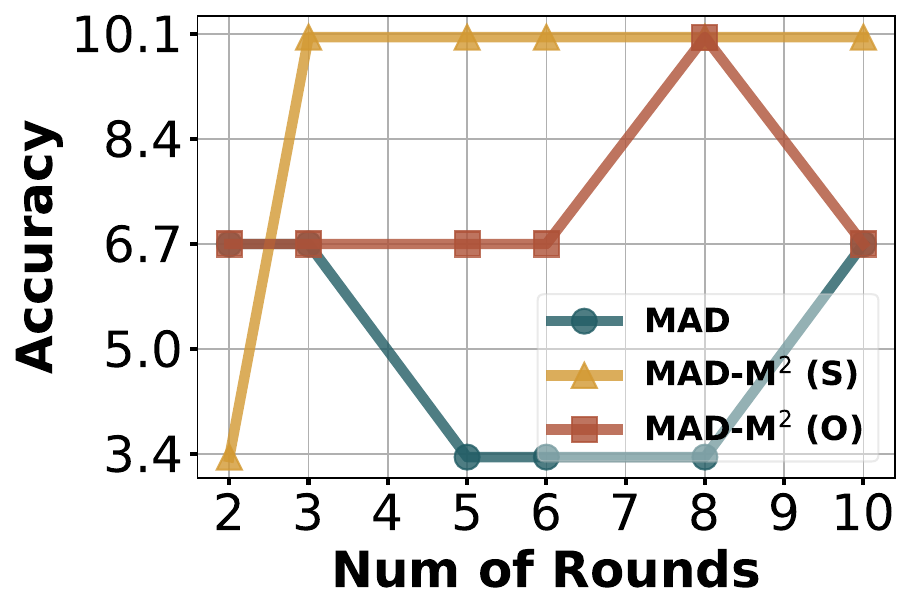}
		\end{minipage}}\vspace{-0.07cm}
    \caption{\textbf{Effect of scaling the number of debate rounds in the case of Qwen2.5-7B-Instruct.} The number of debate rounds increases from 2 to 10. According to the figures, both frameworks basically benefit from the increase in debate rounds. In most cases, \methodname (S) outperforms \methodname (O).}
    \label{fig:qwen2.5-7b_round_scaling}
    \end{center}
    \vskip -0.2in
    \vspace{-6pt}
\end{figure}

\begin{figure}[t]
	\vskip 0.0in
	\begin{center}
	\centering
    \subfigure[GSM8K\label{fig:gsm8k_comparison_roundscale_mad_madmm_deepseek-math-7b}]{
			\begin{minipage}[t]{0.189\linewidth}
				\centering
				\includegraphics[width=1.0\linewidth]{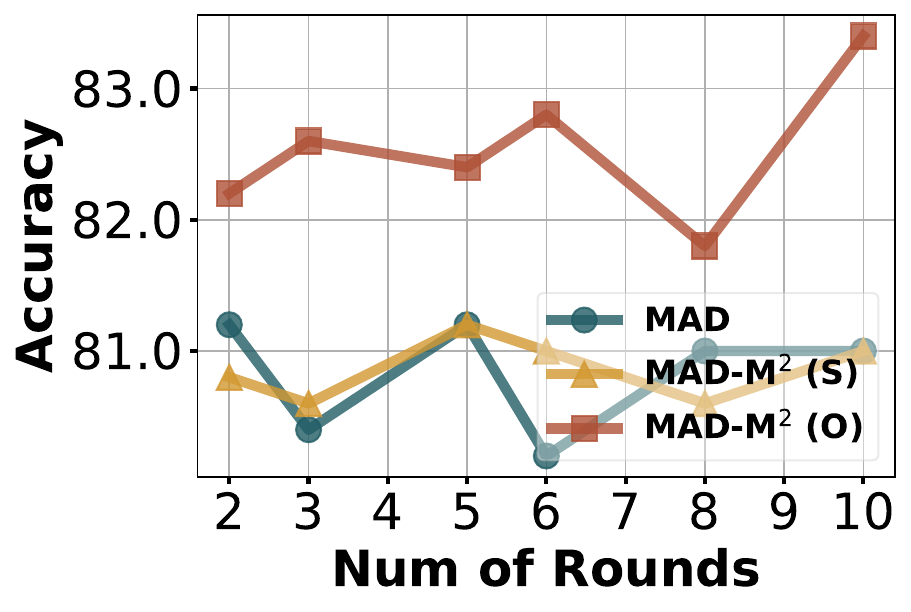}
		\end{minipage}}\vspace{-0.07cm}
    \subfigure[MATH\label{fig:math_comparison_roundscale_mad_madmm_deepseek-math-7b}]{
			\begin{minipage}[t]{0.189\linewidth}
				\centering
				\includegraphics[width=1.0\linewidth]{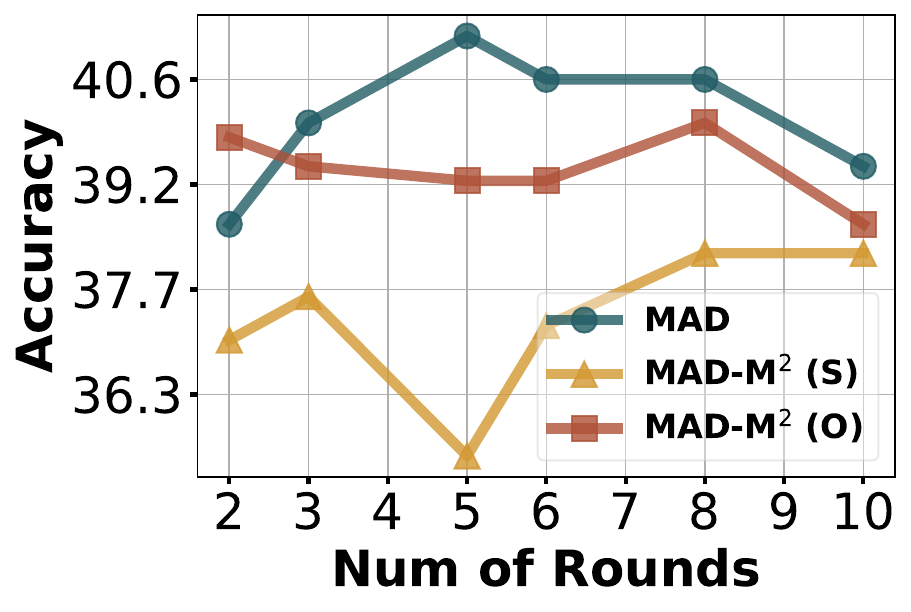}
		\end{minipage}}\vspace{-0.07cm}
    \subfigure[MMLU\_Pro\label{fig:mmlu_pro_comparison_roundscale_mad_madmm_deepseek-math-7b}]{
			\begin{minipage}[t]{0.189\linewidth}
				\centering
				\includegraphics[width=1.0\linewidth]{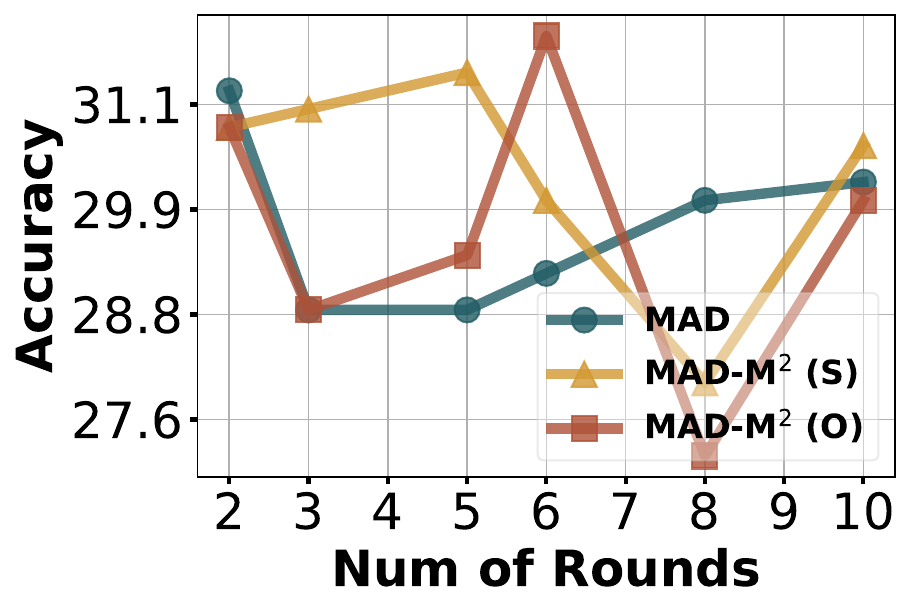}
		\end{minipage}}\vspace{-0.07cm}
    \subfigure[AIME24\label{fig:aime24_comparison_roundscale_mad_madmm_deepseek-math-7b}]{
			\begin{minipage}[t]{0.189\linewidth}
				\centering
				\includegraphics[width=1.0\linewidth]{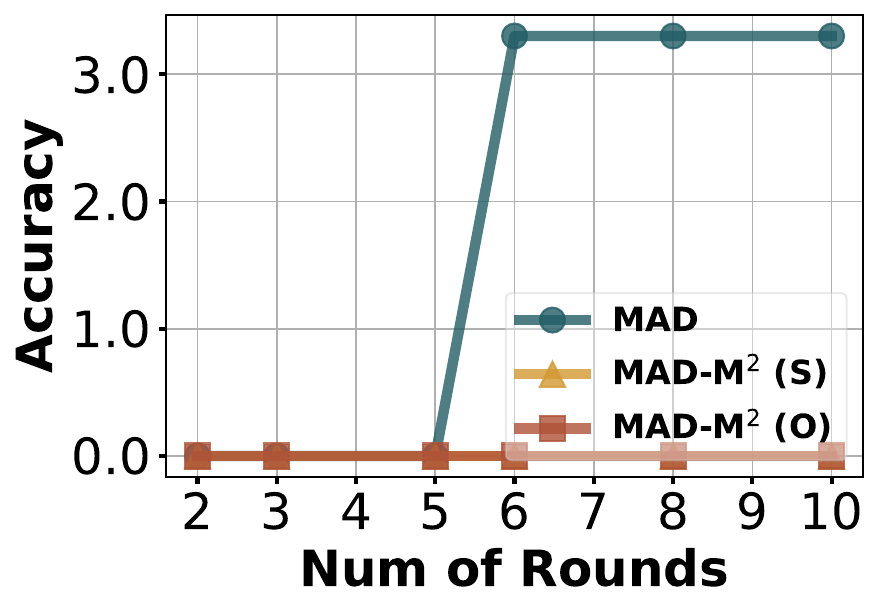}
		\end{minipage}}\vspace{-0.07cm}	
    \subfigure[AIME25\label{fig:aime25_comparison_roundscale_mad_madmm_deepseek-math-7b}]{
			\begin{minipage}[t]{0.189\linewidth}
				\centering
				\includegraphics[width=1.0\linewidth]{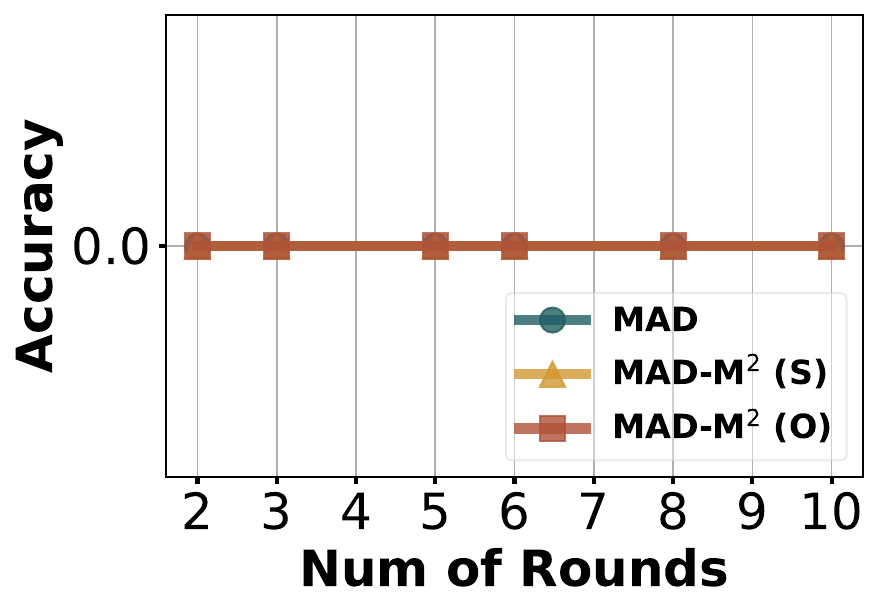}
		\end{minipage}}\vspace{-0.07cm}
    \caption{\textbf{Effect of scaling the number of debate rounds w.r.t. DeepSeek-Math-7B-Instruct.} The number of debate rounds increases from 2 to 10. According to the figures, the performance of different cases varies. In most cases, \methodname (O) achieves better performance than \methodname (S).}
    \label{fig:deepseek-math-7b_round_scaling}
    \end{center}
    \vskip -0.2in
    \vspace{-6pt}
\end{figure}


\textbf{Scaling of Number of Agents \& Debate Rounds.} We investigate the scaling capability of \methodname. As a comparison, we take MAD as the baseline.
In the case of scaling the number of agents, we fix the number of debate rounds as 2 and increase the number of agents to 4, 5, 6, 8, and 10, respectively. The results of both MAD and \methodname\ equipped with two different LLMs (Qwen2.5-7B and DeepSeek-Math-7B) are visualized in Figs.~\ref{fig:qwen2.5-7b_agent_scaling} and~\ref{fig:deepseek-math-7b_agent_scaling}. Empirically, both MAD and \methodname\ benefit from the increase in the number of agents. From the perspective of masking strategy, subjective masking performs better than objective masking on \methodname\ with weak LLMs (e.g., Qwen2.5-7B-Instruct) while objective masking works better on powerful LLMs (e.g., Qwen2.5-Math-7B-Instruct). 
In the case of scaling the number of debate rounds, we fix the number of agents to 3 and increase the number of debate rounds to 4, 5, 6, 8, and 10, respectively. The results with respect to two LLMs are visualized in Figs.~\ref{fig:qwen2.5-7b_round_scaling} and~\ref{fig:deepseek-math-7b_round_scaling}. Increasing the number of debate rounds does not consistently improve performance in all cases. Specifically, increasing the number of debate rounds benefits the frameworks with Qwen2.5-7B-Instruct. However, in the case where the powerful models (e.g., DeepSeek-Math-7B-Instruct) are equipped, with the increase of debate rounds, the performance even drops. In addition, consistent with the case of scaling the number of agents, subjective masking performs better than objective masking on \methodname\ with weak LLMs while objective masking works better on \methodname\ with powerful LLMs. More analysis results are available in Appendix~\ref{appendix:scaling_analyses} 

\textbf{Comparison between \methodname\ and Sparse MAD.} We here compare the performance between our proposed \methodname\ and Sparse MAD, which is a related work of our method. Here, we consider an MAD with 6 agents and 2 debate rounds. For Sparse MAD, each agent only obtains memories from the two agents it connects to. The results are reported in Table~\ref{tab:comparison_sparse_mad}. Generally, \methodname\ achieves better performance than Sparse MAD in most cases, especially on those hard reasoning tasks (e.g., AIME24 \& 25). For example, in the case of Qwen2.5-7B-Instruct, \methodname\ outperforms Sparse MAD. Although Sparse MAD excels \methodname\ on MATH in the case of Qwen2.5-Math-7B-Instruct, its catastrophic drop in MMLU\_Pro suggests a lack of robust generalization compared to our proposed \methodname. Thus, \methodname\ can generally achieve more balanced performance on the reasoning tasks.


\vspace{-8pt}
\section{Conclusion}
\vspace{-6pt}
Our work mainly focuses on the robustness of the conventional multi-agent debate framework. In this paper, we find that MAD is vulnerable to erroneous memories from the last debate round. The erroneous memories may mislead agents from the original correct reasoning to erroneous ones. To understand this phenomenon, we theoretically demonstrate that the reasoning capability of LLM agents in the next debate round is closely related to the number of erroneous memories in the previous round. Inspired by this observation, we propose a simple yet effective framework, multi-agent debate with memory masking, to enhance the reasoning capability of MAD by masking potential erroneous memories. Extensive experiments and analyses demonstrate the efficacy of our proposed method.

\clearpage
\section*{Acknowledgement}
HDT, XF and BH were supported by RGC Young Collaborative Research Grant No. C2005-24Y, Tencent WeChat Faculty Research Award, and HKBU CSD Departmental Incentive Scheme.


\section*{Ethics Statement}
We believe that our work raises no significant ethical concerns.
\section*{Reproducibility statement}
In this paper, both theoretical and empirical results are included. For theoretical results, the necessary assumptions and the completed proofs have been provided in the main paper and the appendix, respectively. For the reproducibility of our experiments, we have provided detailed explanations about our experimental settings and the necessary introduction to the datasets in our paper. 
\bibliography{iclr2026_conference}
\bibliographystyle{iclr2026_conference}

\clearpage
\appendix
\section{Detailed Related Work}
\label{appendix:detailed_related_work}
Recently, more and more attention has been attracted to large language models~\citep{gpt-4,qwen2.5}, which are parameterized with millions of weights and trained on numerous corpora. Along with the impressive improvements in performance on conventional natural language tasks, one important emergent ability of these large language models is performing reasoning tasks.

\textbf{LLM Reasoning.} As the most impressive emergent capability of large language models, reasoning has received more and more attention from the community. Such a capability enables LLMs to address complex questions in a logical way as human beings based on their prior knowledge without further training. So far, numerous research works have been conducted to improve the reasoning capability of LLMs. One typical framework to enhance the reasoning capability of LLMs is Chain-of-Thought (CoT)~\citep{cot}. In CoT, LLMs are instructed to break the tasks into sequential steps and solve the problem step by step. Based on CoT, some other variants are proposed to further improve the reasoning capability of LLMs~\citep{zelikman2022star, cot_sc, shum2023automatic}. Among these works, a famous work is Self-Consistency Chain-of-Thought (CoT-SC)~\citep{cot_sc}. CoT-SC improves the reasoning capability of LLMs significantly by simply sampling multiple reasoning trajectories and performing majority voting on these trajectories for the final answer. In addition to modeling the reasoning trajectories into the chain of thoughts, other works also explore other logical structures. For example, \cite{tab_cot} (Tab-CoT) models the reasoning trajectories in a highly structured way. \cite{tot} models the reasoning trajectories into a tree-like path (a.k.a. Tree-of-Thought, ToT). This further facilitates evaluating multiple reasoning paths and self-assessment. However, both chain-based and tree-based reasoning paradigms are constrained to address linear reasoning tasks. To solve this problem, Graph-of-Thoughts~\citep{graph_cot} is proposed to model the reasoning trajectory into a flexible graph to solve non-linear tasks.

\textbf{Multi-agent Debate.} Multi-agent debate (MAD)~\citep{mad_yilun} is equivalent to a scaling of conventional reasoning paradigms. By comparing the answers derived from a set of LLMs, the fallacious content can be removed, and the correct answers can be achieved. The main goal of MAD is to perform reasoning with multiple LLMs in the way of debate as human beings do. Specifically, in this case, the reasoning is performed by allowing multiple LLMs to debate on their previous responses in a multi-round way. The final answer is obtained via performing majority voting on the set of responses in the last debate round. By considering previous responses, the answers generated in the new round can be further refined. Compared to the conventional reasoning paradigms, such as CoT, MAD consumes more resources (e.g., tokens and time)~\citep{mad_yilun,li2024improving,groupdebate,s2_mad}. Moreover, due to the randomness of response generation, multi-agent debate also suffers from the noisy responses~\citep{s2_mad}. Thus, a series of works is proposed to reduce the resource consumption and improve the performance of MAD by polishing the memories. Specifically, Sparse MAD (S-MAD)~\citep{li2024improving} proposes to formulate the MAD framework as a graph and sparsify the topology of the graph, which is able to simultaneously reduce resource consumption and improve the performance. Moreover, \cite{groupdebate} proposes Group Debate (GD) to divide all agents into several debate groups and only allow agents to debate in their own group. Further, Selective Sparse MAD (S$^2$-MAD~\citep{s2_mad}) is proposed to reduce the redundant content and unproductive discussions in debate to improve the efficiency and performance of MAD. Recently, \citet{choi2025debate} conducted a theoretical analysis on the MAD framework, demonstrating that majority voting plays an important role in MAD achieving good performance in reasoning tasks.

\textbf{Context Engineering.} 
Context Engineering methods~\citep{mei2025survey} manipulate unstructured streaming LLM reasoning to construct structured context systems, enabling sophisticated LLM-driven MAD systems and agentic systems by managing scaling context length, where numerous tokens are irrelevant or confusing for answers.
Recent advancements in context engineering are two-fold: context augmentation and context compression.
Context augmentation methods aim at extending critical information in context to optimize reasoning performance. Retrieval-augmented Generation (RAG) methods~\citep{lewis2020retrieval, gao2023retrieval, fan2024survey} provide access to external information sources, including databases~\citep{lian2024chatbi}, knowledge graphs~\citep{sun2024thinkongraph, luo2024reasoning}, and textual document collections~\citep{asai2024selfrag, sarthi2024raptor}, enabling access to relevant knowledge. Additionally, comprehensive feedback approaches, such as Self-Refine~\citep{madaan2023selfrefine}, employ LLMs to evaluate their prior reasoning for self-improvement, while Reflexion~\citep{shinn2023reflexion}, STaR~\citep{zelikman2022star}, and AlphaEvolve~\citep{novikov2025alphaevolve} introduce reliable external environment feedback for consistent performance improvement.
Different from context augmentation, context compression methods aim at addressing the overlong context issue that challenges the finite context window size by dynamically reducing the contextual information. For example, INFTYTHINK~\citep{yan2025inftythink} iteratively summarizes prior reasoning contexts for short-context reasoning, while CoThink~\citep{fan2025cothink} employs short-response LLMs to orchestrate reasoning strategy and dynamically control context length. Additionally, O1-Pruner~\citep{luo2025o1}, L1~\citep{aggarwal2025l}, and DAST~\citep{dast} consider the reasoning length during the training phase to encourage LLMs to address the given queries with fewer tokens.

\section{Proofs}
\label{appendix:proofs}
\subsection{Proof of Proposition~\ref{proposition:cot-sc}}
\begin{proof}
    Since CoT-SC performs reasoning on the given query by voting the majority on a set of independently generated multiple responses, given Assumption~\ref{assumption}, the number of the correct responses conforms to a binomial distribution $N_{\rm cor}\sim\mathcal{B}(N_{\rm sc}, p)$:
    \[P(N_{\rm cor}=k)=\binom{N_{\rm sc}}{k}p^{k}(1-p)^{N_{\rm sc}-k},\]
    where $\mathcal{B}$ denotes the binomial distribution. 
    Thus, the case that CoT-SC correctly answers the question can be formulated as the event: $ N_{\rm cor}>\frac{N_{\rm sc}}{2}$. Then, we further have the probability that CoT-SC correctly infers the answer to the query
    \[
    \begin{aligned}
        P(N_{\rm cor}>\frac{N_{\rm sc}}{2})&=\sum_{k=\lfloor \frac{N_{\rm sc}}{2}\rfloor+1}^{N_{\rm sc}}P(N_{\rm cor}=k)\\
        &=\sum_{k=\lfloor \frac{N_{\rm sc}}{2}\rfloor+1}^{N_{\rm sc}}\binom{N_{\rm sc}}{k}p^{k}(1-p)^{N_{\rm sc}-k},
    \end{aligned}
    \]
    where $\lfloor\cdot\rfloor$ is the flooring operator, and $\binom{N_{\rm sc}}{k}=\frac{N_{\rm sc}!}{k!(N_{\rm sc}-k)!}$ is the binomial coefficient with $N_{\rm sc}$ and $k$.

    Consider a set of independent random variables $\{X_i\}_{i=1}^{N_{\rm sc}}$, where $X_i\in\{0, 1\}$, $P(X_i=1)=p$ and $P(X_i=0)=1-p$. Let $N_{\rm cor}=\sum_{i=1}^{N_{\rm sc}}X_i$, we then have $E[X_i]=p$ and $E[N_{\rm cor}]=N_{\rm sc}p$. With the Hoeffding Inequality, we have
    \[
        P(N_{\rm cor}-E[N_{\rm cor}]\ge t)\leq\exp\left(-\frac{2t^2}{N_{\rm sc}}\right),
    \]
    

    If $p<\frac{1}{2}$, then $E[N_{\rm cor}]=N_{\rm sc}p<\frac{N_{\rm sc}}{2}$. Let $t=(\frac{1}{2}-p)N_{\rm sc}>0$:
    \[
        P(N_{\rm cor}>\frac{N_{\rm sc}}{2})\leq\exp\left(-2N_{\rm sc}\left(\frac{1}{2}-p\right)^2\right).
    \]
    
    Else, if $p\ge\frac{1}{2}$, then $E[N_{\rm cor}]\ge\frac{N_{\rm sc}}{2}$. Thus, we consider the complementary case: $P(N_{\rm cor}-E[N_{\rm cor}]\leq\frac{N_{\rm sc}}{2})$. Then, let $t=(p-\frac{1}{2})N_{\rm sc}>0$, we can get:
    \[
        P(N_{\rm cor}\leq\frac{N_{\rm sc}}{2})\leq\exp\left(-2N_{\rm sc}\left(\frac{1}{2}-p\right)^2\right).
    \]
    Thus, for the original case $P(N_{\rm cor}>\frac{N_{\rm sc}}{2})$, we have
    \[
        P(N_{\rm cor}>\frac{N_{\rm sc}}{2})\geq1-\exp\left(-2N_{\rm sc}\left(\frac{1}{2}-p\right)^2\right).
    \]
    The proof is completed.
\end{proof}

\subsection{Proof of Proposition~\ref{proposition:mad}}
\begin{proof}
    In a 2-round multi-agent debate case, where $N_{\rm a}$ agents are involved in each debate round, we first consider the initial debate round, where the responses are only conditioned on the query prompt. Thus, the probability of MAD achieving $N^{(1)}_{\rm cor}\in\{0, 1, 2, ..., N_{\rm a}\}$ correct reasoning responses is:
    \[
        P(N^{(1)}_{\rm cor}=j)=\binom{N_{\rm a}}{j}p^j(1-p)^{N_{\rm a}-j}, \text{where}\ j\in\{0, 1, 2, ..., N_{\rm a}\}.
    \]

    For the second round, assume that the probability of LLM agents correctly inferring the answer is modified to $e^{-\alpha N_{\rm e}}$, where $N_{\rm e}=N_{\rm a}-N^{(1)}_{\rm cor}$, depending on the number of correct memories in the previous debate round. Note that new responses are generated independently in the second round among agents, although the generated answers are conditioned on memories, and the number of correct responses in the second round also conforms to a binomial distribution $N^{(2)}_{\rm cor}\sim\mathcal{B}(N_{\rm a}, e^{\alpha(j-N_{\rm a})})$, $\alpha>0$. In this case, the probability that the answer is correctly inferred is:
    \[
    \begin{aligned}
        P(N^{(2)}_{\rm cor}>\frac{N_{\rm a}}{2}) &= \sum_{j=0}^{N_{\rm a}}\sum_{k=\lfloor \frac{N_{\rm a}}{2}\rfloor+1}^{N_{\rm a}}P(N^{(2)}_{\rm cor}=k|N^{(1)}_{\rm cor}=j)P(N^{(1)}_{\rm cor}=j) \\
        &= \sum_{j=0}^{N_{\rm a}}P(N^{(1)}_{\rm cor}=j)\sum_{k=\lfloor \frac{N_{\rm a}}{2}\rfloor+1}^{N_{\rm a}}\binom{N_{\rm a}}{k}e^{\alpha k(j-N_{\rm a})}(1-e^{\alpha(j-N_{\rm a})})^{N_{\rm a}-k},
    \end{aligned}
    \]
    where $P(N^{(1)}_{\rm cor}=j)=\binom{N_{\rm a}}{j}p^j(1-p)^{N_{\rm a}-j}$.

    Then, similar to Proposition~\ref{proposition:cot-sc}, if $e^{\alpha(j-N_{\rm a})}\geq\frac{1}{2}$ (i.e., $j\ge N_{\rm a}-\frac{1}{\alpha}\ln 2$), we have the lower bound:
    \[
        P(N^{(2)}_{\rm cor}>\frac{N_{\rm a}}{2}) \geq \sum_{j=\lceil N_{\rm a}-\frac{\ln2}{\alpha}\rceil}^{N_{\rm a}}\binom{N_{\rm a}}{j}p^j(1-p)^{N_{\rm a}-j}\left(1-\exp\left(-2N_{\rm a}\left(\frac{1}{2}-e^{\alpha(j-N_{\rm a})}\right)^2\right)\right).
    \]
    
     Otherwise, if $e^{\alpha(j-N_{\rm a})}<\frac{1}{2}$ (i.e., $j<N_{\rm a}-\frac{1}{\alpha}\ln2$), we have:
    \[
    \begin{aligned}
        P(N^{(2)}_{\rm cor}>\frac{N_{\rm a}}{2}) = &\sum_{j=0}^{\lfloor N_{\rm a}-\frac{\ln2}{\alpha}\rfloor}\binom{N_{\rm a}}{j}p^j(1-p)^{N_{\rm a}-j}\sum_{k=\lfloor \frac{N_{\rm a}}{2}+1\rfloor}^{N_{\rm a}}\binom{N_{\rm a}}{k}e^{\alpha k(j-N_{\rm a})}(1-e^{\alpha(j-N_{\rm a})})^{N_{\rm a}-k}\\
        &+\sum_{j=\lceil N_{\rm a}-\frac{\ln2}{\alpha}\rceil}^{N_{\rm a}}\binom{N_{\rm a}}{j}p^j(1-p)^{N_{\rm a}-j}\sum_{k=\lfloor \frac{N_{\rm a}}{2}+1\rfloor}^{N_{\rm a}}\binom{N_{\rm a}}{k}e^{\alpha k(j-N_{\rm a})}(1-e^{\alpha(j-N_{\rm a})})^{N_{\rm a}-k}.
    \end{aligned}
    \]
    Since the case $j\ge N_{\rm a}-\frac{1}{\alpha}\ln 2$ is not considered here, we simply bound the probability $P(N_{\rm cor}^{(2)}>\frac{N_{\rm a}}{2}\mid N_{\rm cor}^{(1)}=j)$ by 1. Thus, we then have: 
    \[
    \begin{aligned}
        P(N^{(2)}_{\rm cor}>\frac{N_{\rm a}}{2}) \leq &\sum_{j=0}^{\lfloor N_{\rm a}-\frac{\ln2}{\alpha}\rfloor}\binom{N_{\rm a}}{j}p^j(1-p)^{N_{\rm a}-j}\sum_{k=\lceil \frac{N_{\rm a}}{2}\rceil+1}^{N_{\rm a}}\binom{N_{\rm a}}{k}e^{\alpha k(j-N_{\rm a})}(1-e^{\alpha(j-N_{\rm a})})^{N_{\rm a}-k}\\&+\sum_{j=\lceil N_{\rm a}-\frac{\ln2}{\alpha}\rceil}^{N_{\rm a}}\binom{N_{\rm a}}{j}p^j(1-p)^{N_{\rm a}-j}.
    \end{aligned}
    \]
    \[
    \begin{aligned}
        P(N^{(2)}_{\rm cor}>\frac{N_{\rm a}}{2}) \leq &\sum_{j=0}^{\lfloor N_{\rm a}-\frac{\ln2}{\alpha}\rfloor}\binom{N_{\rm a}}{j}p^j(1-p)^{N_{\rm a}-j}\exp\left(-2N_{\rm a}\left(\frac{1}{2}-e^{\alpha(j-N_{\rm a})}\right)^2\right)\\&+\sum_{j=\lceil N_{\rm a}-\frac{\ln2}{\alpha}\rceil}^{N_{\rm a}}\binom{N_{\rm a}}{j}p^j(1-p)^{N_{\rm a}-j}.
    \end{aligned}
    \]

    The proof is completed.
\end{proof}

\section{Detailed Analysis on Token Consumption}
\label{appendix:token_consumption_analysis}
Denote the token of the query $\uprmvar{x}{test}$ as $\uprmvar{T}{q}$, the token of the output of the agent $A_{\theta_i}$ at the $r$-th debate round as $\uprmvar{T}{o}_{i, r}$. Then, we formulate the token consumption of both MAD and \methodname as follows.

For MAD, at the initial round, the query $\uprmvar{x}{test}$ is fed into $N_{\rm a}$ LLM agents and then $N_{\rm a}$ responses are generated from these LLM agents. Thus, the consumption of MAD at the initial debate round is:
\[
    \uprmvar{N}{token}_{1} = N_{\rm a}\uprmvar{T}{q} + \sum_{i=1}^{N_{\rm a}}\uprmvar{T}{o}_{1, i}.
\]

Then, from the second debate round, each agent will take all $N_{\rm a}$ outputs in the last debate round and output a new reasoning response with the query. Thus, the consumption of tokens for MAD at the $r$-th debate round is:
\[
    \uprmvar{N}{token}_{r} = N_{\rm a}\left(\uprmvar{T}{q}+\sum_{i=1}^{N_{\rm a}}\uprmvar{T}{o}_{r-1, i}\right)+\sum_{i=1}^{N_{\rm a}}\uprmvar{T}{o}_{r, i}.
\]

Thus, the total consumption of tokens of MAD can be formulated as:
\[
\begin{aligned}
    \uprmvar{N}{token}_{\rm MAD} &= \sum_{r=1}^{N_{\rm round}}\uprmvar{N}{token}_r\\
    &= N_{\rm a}N_{\rm round}\uprmvar{T}{q}  +  N_{\rm a}\sum_{r=2}^{N_{\rm round}}\sum_{i=1}^{N_{\rm a}}\uprmvar{T}{o}_{r-1, i}+\sum_{r=1}^{N_{\rm round}}\sum_{i=1}^{N_{\rm a}}\uprmvar{T}{o}_{r, i}.
\end{aligned}
\]

For \methodname, at the initial round, the consumption of tokens is the same as MAD:
\[
    \uprmvar{N}{token}_{1} = N_{\rm a}\uprmvar{T}{q} + \sum_{i=1}^{N_{\rm a}}\uprmvar{T}{o}_{1, i}.
\]

From the second round, the agents will first evaluate the previous memories and mask the potentially incorrect memories. In this step, all memories are fed into each agent, and the agent will output ``yes'', ``no'', or ``unsure''. Here, we ignore the tokens of outputs and the instructions. We then formulate the consumption of tokens at this step as:
\[
    \uprmvar{N}{eval\_token}_r = N_{\rm a}\sum_{i=1}^{N_{\rm a}}\uprmvar{T}{o}_{r-1, i}.
\]
Then, based on the selected memories $\widehat{\mathcal{M}}_{r-1}^{(i)}$, the input tokens for each agent in the $r$-th debate round should be no more than the sum of the query tokens and the tokens of all previous tokens. Thus, considering the output tokens, we have:
\[
    \uprmvar{N}{token}_{r} \leq N_{\rm a}\left(\uprmvar{T}{q}+\sum_{i=1}^{N_{\rm a}}\uprmvar{T}{o}_{r-1, i}\right)+\sum_{i=1}^{N_{\rm a}}\uprmvar{T}{o}_{r, i}.
\]

Thus, the total consumption of \methodname\ can be formulated as:
\[
    \begin{aligned}
    \uprmvar{N}{token}_{\rm MAD-M^2} &= \uprmvar{N}{token}_1+\sum_{r=2}^{N_{\rm round}}\left(\uprmvar{N}{eval\_token}_{r}+\uprmvar{N}{token}_r\right)\\
    &\leq N_{\rm a}\uprmvar{T}{q} + \sum_{i=1}^{N_{\rm a}}\uprmvar{T}{o}_{1, i} + \sum_{r=2}^{N_{\rm round}}\left(N_{\rm a}\sum_{i=1}^{N_{\rm a}}\uprmvar{T}{o}_{r-1, i}+N_{\rm a}\left(\uprmvar{T}{q}+\sum_{i=1}^{N_{\rm a}}\uprmvar{T}{o}_{r-1, i}\right)+\sum_{i=1}^{N_{\rm a}}\uprmvar{T}{o}_{r, i}\right)\\
    &=N_{\rm a}N_{\rm round}\uprmvar{T}{q}  +  2N_{\rm a}\sum_{r=2}^{N_{\rm round}}\sum_{i=1}^{N_{\rm a}}\uprmvar{T}{o}_{r-1, i}+\sum_{r=1}^{N_{\rm round}}\sum_{i=1}^{N_{\rm a}}\uprmvar{T}{o}_{r, i}.
\end{aligned}
\]

\clearpage
\section{Complete Experimental Settings}
\label{appendix:complete_exp_settings}
In this section, we provide more detailed experimental settings adopted in this paper to ensure that the results reported in this paper are reproducible. The content here refers to the details of adopted LLMs, evaluation benchmarks, hyperparameters, hardware, and reasoning/debate/masking prompts.

\vspace{-8pt}
\subsection{Details of Prompts}
\vspace{-4pt}
In this section, we provide comprehensive details regarding the prompts adopted in the experiments of our proposed \methodname\ framework. Specifically, we provide the prompts adopted in the CoT reasoning, the multi-agent debate process, and the subjective evaluation operation on memories.

\begin{tcolorbox}[breakable,
  colback=gray!20!white,
  colframe=gray!80!black,
  title=CoT Prompt,
  fonttitle=\bfseries
]
\texttt{<QUESTION>}

Please solve the problem step by step. \\

\#\#\# Response format (MUST be strictly followed) (DO NOT include any other formats except for the given XML format): 

<think>YOUR THINKING HERE</think>

<answer>YOUR FINAL ANSWER ONLY, NO OTHER TEXT</answer>.
\end{tcolorbox}


\begin{tcolorbox}[breakable,
  colback=gray!20!white,
  colframe=gray!80!black,
  title=Debate Prompt,
  fonttitle=\bfseries
]
These are the potential solutions to the problem:

\texttt{<CONTEXT>}

Use the potential solutions as additional information for the following question.

Question:\texttt{<QUESTION>}

Please think step by step and solve the problem.

\#\#\# Response format (MUST be strictly followed) (DO NOT include any other formats except for the given XML format): 

<think>YOUR THINKING HERE</think>

<answer>YOUR FINAL ANSWER ONLY, NO OTHER TEXT</answer>.
\end{tcolorbox}

\begin{tcolorbox}[breakable, 
  colback=gray!20!white,
  colframe=gray!80!black,
  title=Self-Evaluation Prompt,
  fonttitle=\bfseries
]
Evaluate the given solutions based on the question. ** Your response MUST end with the following format: <label>YES</label> or <label>NO</label> or <label>NOT SURE</label>. ** Return YES if the solution is completely correct, NO if any part of the solution is incorrect, and NOT SURE if you are unsure.

Question: \texttt{<QUESTION>}

Solutions: \texttt{<SOLUTION>}
\end{tcolorbox}

\vspace{-8pt}
\subsection{Baselines} 
\vspace{-4pt}
To validate the effectiveness of our proposed \methodname, the following reasoning frameworks are adopted as baselines in this paper: (1) Chain-of-Thought (CoT)~\citep{cot}; (2) Self-Consistency Chain-of-Thoughts (CoT-SC)~\citep{cot_sc} with 6 independent reasoning paths; and (3) Multi-Agent Debate (MAD, 3-agent 2-round)~\citep{mad_yilun}. As a default setting, an MAD framework is composed of 3 LLM agents and 2 debate rounds if there are no specific clarifications.

\vspace{-8pt}
\subsection{Large Language Models} 
\vspace{-4pt}
In this work, to validate the performance of our proposed \methodname, we mainly consider evaluating baseline reasoning/multi-agent debate frameworks and our proposed \methodname\ with open-source large language models with different sizes. Specifically, we adopt Qwen2.5-7B-Instruct, Qwen2.5-Math-7B-Instruct~\citep{qwen2.5,qwen25math}, DeepSeek-Math-7B-Instruct~\citep{shao2024deepseekmath}, and QwQ-32B~\citep{qwq32b} in all our experiments and analyses. 

\vspace{-8pt}
\subsection{General Test Data Settings}
\vspace{-4pt}
In all experiments of this paper, we evaluate all reasoning frameworks with trials under five different random seeds (i.e., 41-45) and report the average of all five runs as the final results. In each run, for GSM8K, MATH, and MMLU\_Pro, we randomly sample 100 samples from a shuffled sequence of the original benchmark due to the computational budgets. Such a setting is consistent with that in the previous work~\citep{mad_yilun}. In contrast, for AIME 24 \& 25, we adopt all questions for evaluation.

\vspace{-8pt}
\subsection{Hyperparameter Settings}
\vspace{-4pt}
\textbf{Hyperparameters of LLMs.} In this paper, all reasoning responses are performed on LLMs with the temperature set to 1.0 except for Qwen2.5-Math-7B-Instruct, since there are many unrecognized characters when performing reasoning with Qwen2.5-Math-7B-Instruct with the temperature set to 1.0. Thus, to avoid failure in reasoning, the temperature for Qwen2.5-Math-7B-Instruct is set to 0.2. Moreover, the \texttt{top\_p} is set to 1.0 for all cases. The maximum length of the token is set to 24064 for Qwen2.5-7B-Instruct, 4096 for Qwen2.5-Math-7B-Instruct and DeepSeek-Math-7B-Instruct, and 131072 for QwQ-32B. Since the length of output tokens of QwQ-32B is usually too long, the thinking tokens in outputs will be truncated. Specifically, only the first 4096 tokens are preserved as memories.

\textbf{Fairness of Comparison between CoT-SC and MAD.} In the experiments of our paper, the default setting for MAD includes 3 agents and 2 debate rounds. Thus, the LLMs in MAD will consume the resources of 6 reasoning trajectory generations. Thus, for fairness, in the case of CoT-SC, the final answer is achieved by performing majority voting among 6 responses generated in the way of CoT.

\textbf{Computational Resources.} In this paper, the main experiments and the analyses regarding Qwen2.5-7B-Instruct, Qwen2.5-Math-7B-Instruct, and DeepSeek-Math-7B-Instruct are conducted with two RTX 3090 GPUs, while the experiments of QwQ-32B are conducted on several A100 and H20 GPUs.

\vspace{-8pt}
\subsection{Details of Evaluation Benchmarks}
\vspace{-4pt}
In this paper, we adopt 5 mainstream benchmarks, ranging from mathematical reasoning to language understanding tasks, to evaluate all baselines and our proposed \methodname. The mathematical reasoning benchmarks include GSM8K, MATH, AIME 24, and AIME 25, while the language understanding benchmark is MMLU\_Pro. Details about these benchmarks are provided as follows.

\vspace{-5pt}
\begin{itemize}[leftmargin=.1in]
\setlength{\itemsep}{-0.05em}
    \item \textbf{AIME 24 \& 25:} AIME 24 \& 25 are challenging competition mathematical question sets on the 2024 and 2025 American Invitational Mathematics Examinations. Each dataset contains 30 hard mathematical problems. These two benchmarks are famous for their difficulty in reasoning.

    \item \textbf{GSM8K~\citep{gsm8k_dataset}:} GSM8K (a.k.a. Grade School Math 8K) is a popular mathematical reasoning dataset consisting of about 8500 high-quality, diverse grade school math word problems. In our experiments, we only adopt its test set that is composed of 1,318 questions for evaluation.
    
    \item \textbf{MATH~\citep{math_dataset}:} MATH is a comprehensive mathematical dataset designed to rigorously challenge the mathematical reasoning abilities of LLMs. The questions in the MATH dataset refer to algebra, geometry, number theory, and combinatorics, thereby thoroughly evaluating the mathematical understanding and problem-solving capabilities of LLMs. In the experiments of this paper, we only adopt its 5,000 test data to evaluate the baselines and our proposed \methodname. 
    
    \item \textbf{MMLU\_Pro~\citep{wang2024mmlu}:} MMLU-Pro is an advanced extension of the Massive Multitask Language Understanding (MMLU) dataset~\citep{mmlu_dataset}, encompassing 57 diverse tasks across domains such as mathematics, computer science, biology, and physics. Compared to MMLU, MMLU-Pro introduces more complex, reasoning-intensive problems, which are more challenging.
\end{itemize}

\section{Complete Experimental Results}
In this section, we provide more supplementary experimental results of those in Section~\ref{sec:experiments} and more analysis results regarding our proposed \methodname\ to validate its efficacy and explore its properties.

\begin{figure}[t]
	\vskip 0.0in
	\begin{center}
	\centering
    \subfigure[GSM8K\label{fig:gsm8k_comparison_agentscale_mad_madmm_qwen2.5-math-7b}]{
			\begin{minipage}[t]{0.189\linewidth}
				\centering
				\includegraphics[width=1.0\linewidth]{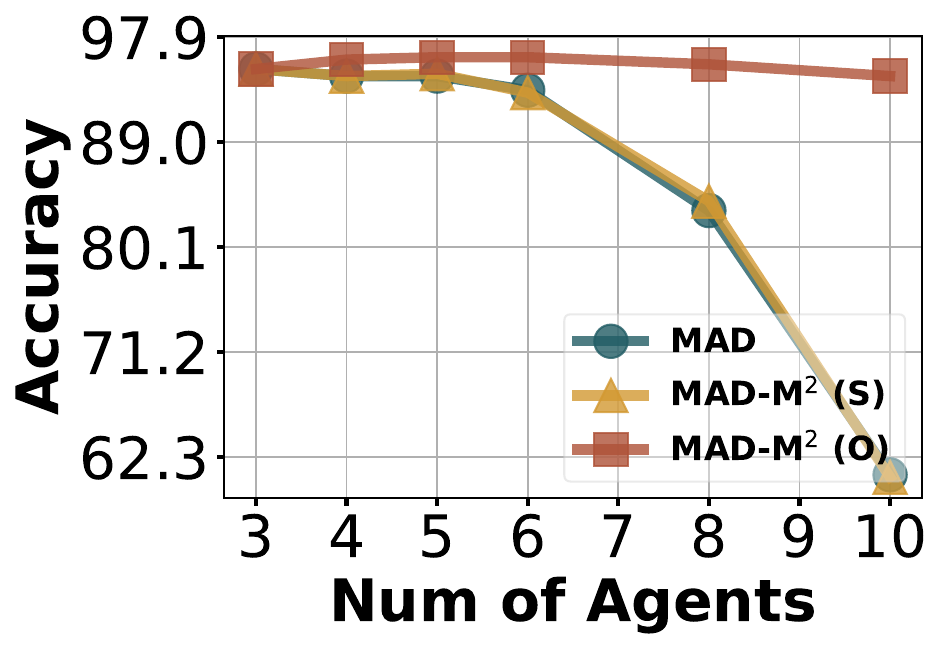}
		\end{minipage}}\vspace{-0.08cm}
    \subfigure[MATH\label{fig:math_comparison_agentscale_mad_madmm_qwen2.5-math-7b}]{
			\begin{minipage}[t]{0.189\linewidth}
				\centering
				\includegraphics[width=1.0\linewidth]{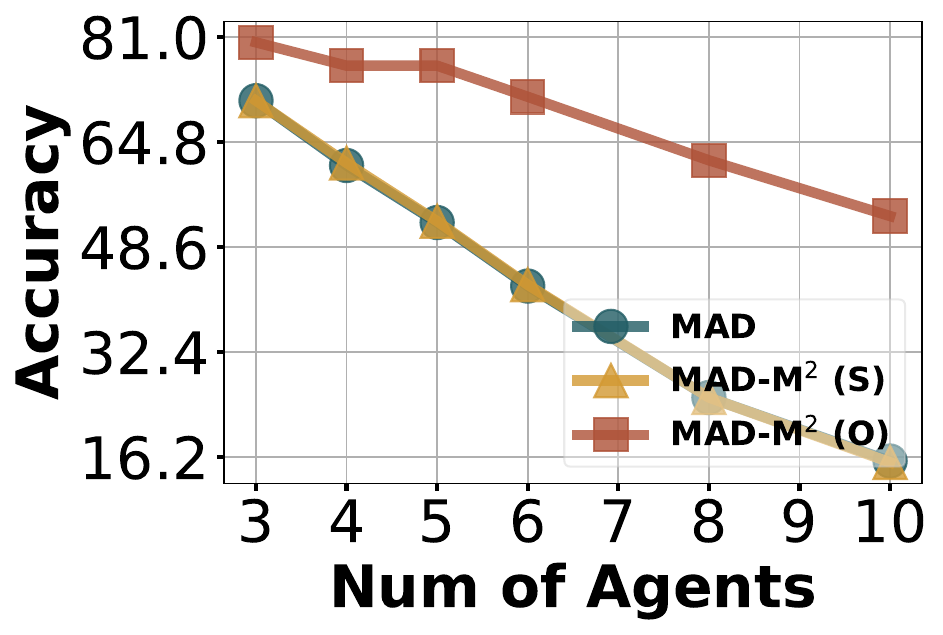}
		\end{minipage}}\vspace{-0.08cm}
    \subfigure[MMLU\_Pro\label{fig:mmlu_pro_comparison_agentscale_mad_madmm_qwen2.5-math-7b}]{
			\begin{minipage}[t]{0.189\linewidth}
				\centering
				\includegraphics[width=1.0\linewidth]{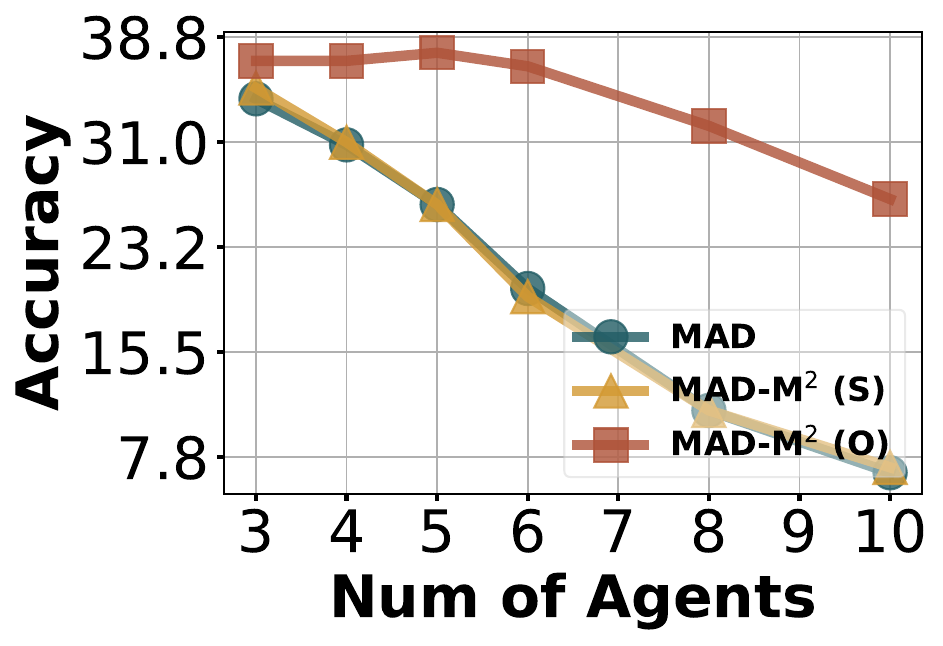}
		\end{minipage}}\vspace{-0.08cm}
    \subfigure[AIME24\label{fig:aime24_comparison_agentscale_mad_madmm_qwen2.5-math-7b}]{
			\begin{minipage}[t]{0.189\linewidth}
				\centering
				\includegraphics[width=1.0\linewidth]{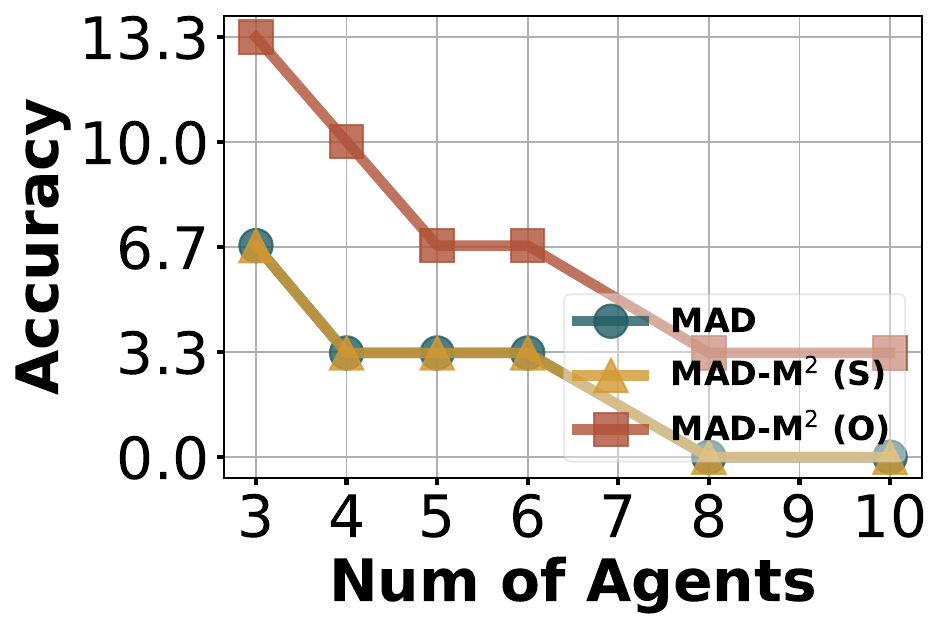}
		\end{minipage}}\vspace{-0.08cm}	
    \subfigure[AIME25\label{fig:aime25_comparison_agentscale_mad_madmm_qwen2.5-math-7b}]{
			\begin{minipage}[t]{0.189\linewidth}
				\centering
				\includegraphics[width=1.0\linewidth]{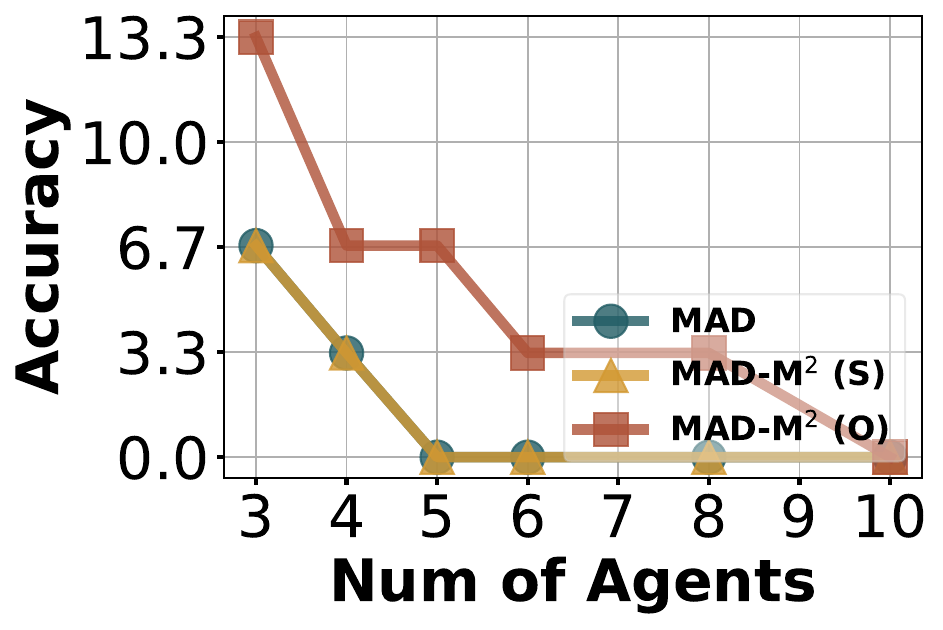}
		\end{minipage}}\vspace{-0.08cm}
    \caption{\textbf{Effect of scaling the number of agents in the case of Qwen2.5-Math-7B-Instruct.} The number of agents is increased from 3 to 10. According to the figures, both frameworks act differently when the number of agents increases and \methodname (O) achieves better performance in most cases.}
    \label{fig:qwen2.5-math-7b_agent_scaling}
    \end{center}
    \vskip -0.2in
    \vspace{-4pt}
\end{figure}
\begin{figure}[t]
	\vskip 0.0in
	\begin{center}
	\centering
    \subfigure[GSM8K\label{fig:gsm8k_comparison_roundscale_mad_madmm_qwen2.5-math-7b}]{
			\begin{minipage}[t]{0.189\linewidth}
				\centering
				\includegraphics[width=1.0\linewidth]{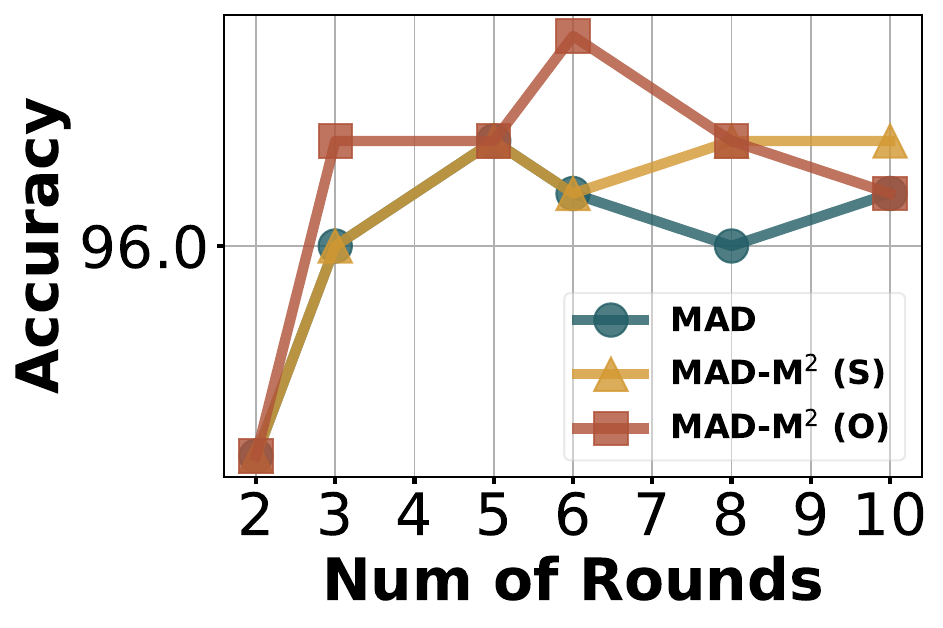}
		\end{minipage}}\vspace{-0.07cm}
    \subfigure[MATH\label{fig:math_comparison_roundscale_mad_madmm_qwen2.5-math-7b}]{
			\begin{minipage}[t]{0.189\linewidth}
				\centering
				\includegraphics[width=1.0\linewidth]{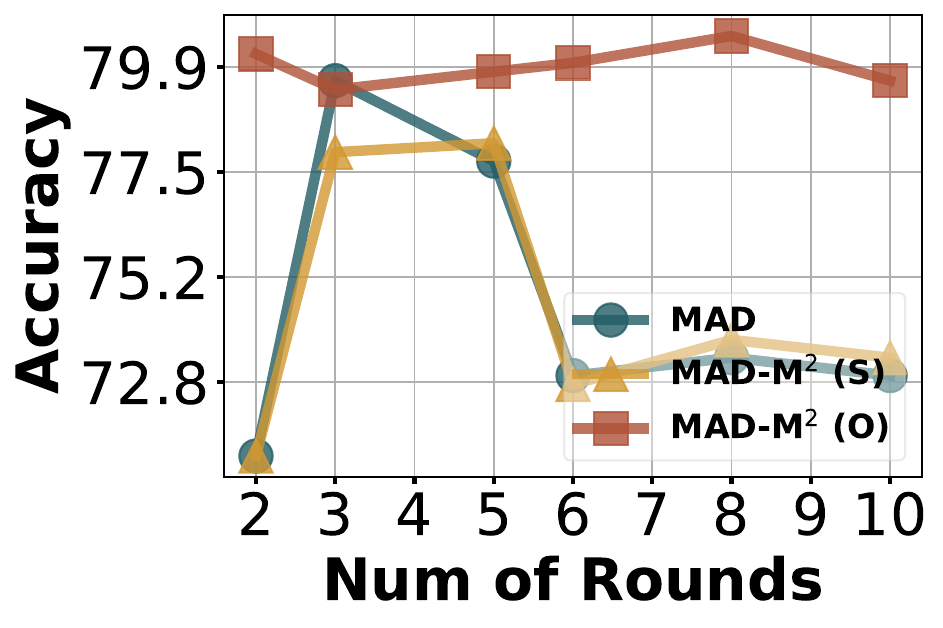}
		\end{minipage}}\vspace{-0.07cm}
    \subfigure[MMLU\_Pro\label{fig:mmlu_pro_comparison_roundscale_mad_madmm_qwen2.5-math-7b}]{
			\begin{minipage}[t]{0.189\linewidth}
				\centering
				\includegraphics[width=1.0\linewidth]{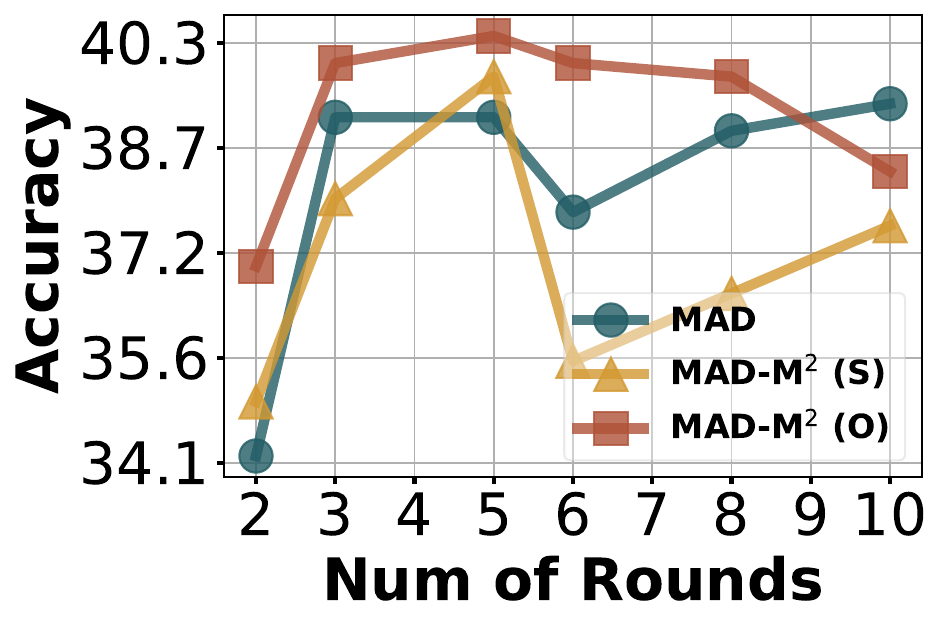}
		\end{minipage}}\vspace{-0.07cm}
    \subfigure[AIME24\label{fig:aime24_comparison_roundscale_mad_madmm_qwen2.5-math-7b}]{
			\begin{minipage}[t]{0.189\linewidth}
				\centering
				\includegraphics[width=1.0\linewidth]{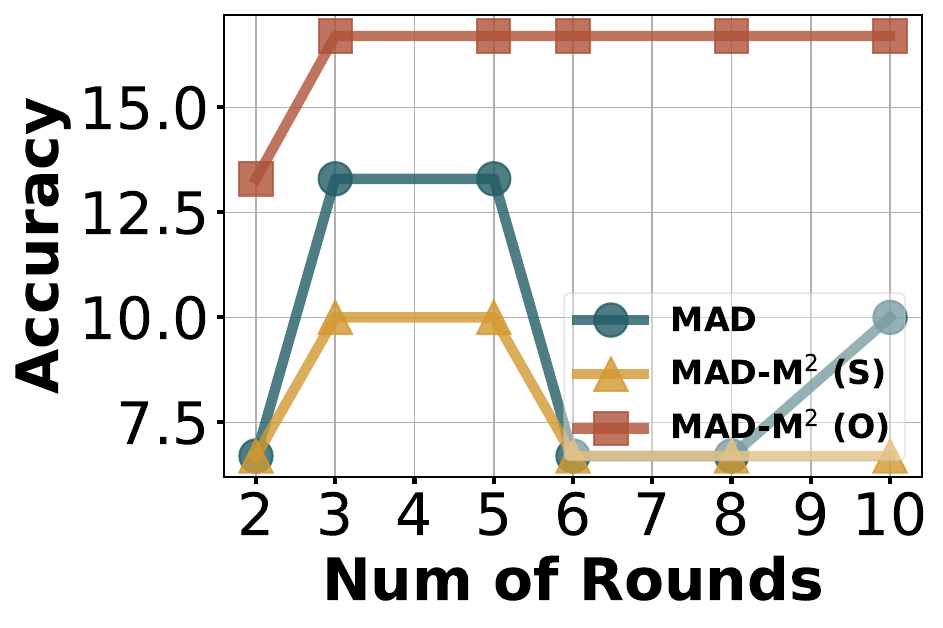}
		\end{minipage}}\vspace{-0.07cm}	
    \subfigure[AIME25\label{fig:aime25_comparison_roundscale_mad_madmm_qwen2.5-math-7b}]{
			\begin{minipage}[t]{0.189\linewidth}
				\centering
				\includegraphics[width=1.0\linewidth]{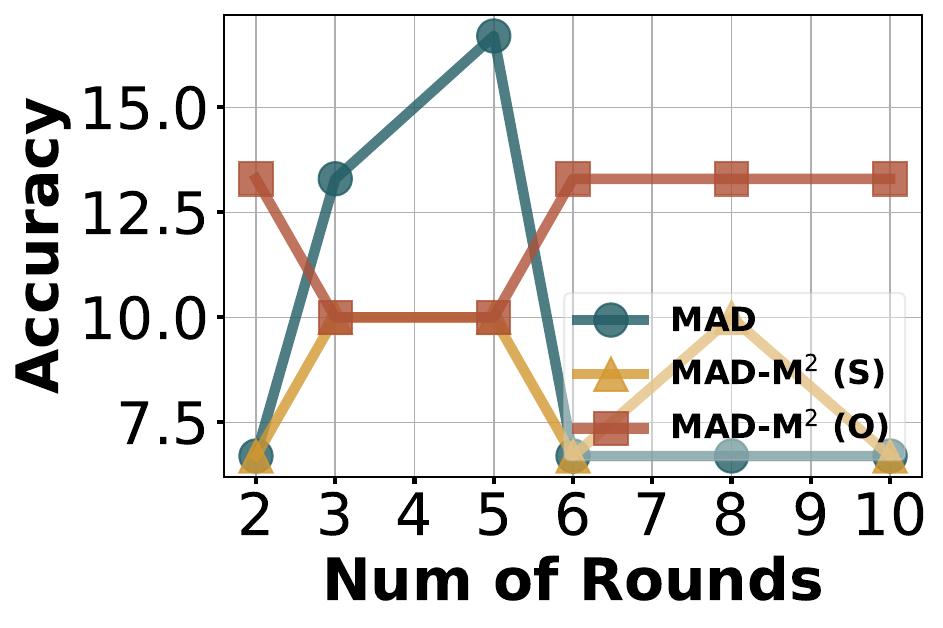}
		\end{minipage}}\vspace{-0.07cm}
    \caption{\textbf{Effect of scaling the number of debate rounds w.r.t. Qwen2.5-Math-7B-Instruct.} The number of debate rounds increases from 2 to 10. According to the figures, the performance of different cases varies. In most cases, \methodname (O) achieves better performance than \methodname (S).}
    \label{fig:qwen2.5-math-7b_round_scaling}
    \end{center}
    \vskip -0.2in
    \vspace{-4pt}
\end{figure}
\vspace{-8pt}
\subsection{Complete Analyses on Scaling the Number of Agents and Debate Rounds}\label{appendix:scaling_analyses}
\vspace{-4pt}
In Section~\ref{subsec:analyses}, we have conducted experiments to figure out the scaling capability of our proposed \methodname equipped with different LLMs (i.e., Qwen2.5-7B-Instruct and DeepSeek-Math-7B-Instruct). Generally, according to the results in Section~\ref{subsec:analyses}, \methodname\ consistently benefits from the scaling of the number of agents, while the performance fluctuates as the number of debate rounds increases. Moreover, another important observation here is that the subjective masking strategy performs better on \methodname\ equipped with weak LLMs while the objective masking strategy achieves better performance on \methodname\ equipped with powerful models. Here, we further visualize the analyses of \methodname\ with different masking strategies on Qwen2.5-Math-7B-Instruct (cf. Figs.~\ref{fig:qwen2.5-math-7b_agent_scaling} and~\ref{fig:qwen2.5-math-7b_round_scaling}).

According to the visualization results, we can observe that the objective masking strategy consistently achieves better performance than the subjective masking strategy when scaling the number of agents and debate rounds. This phenomenon aligns with that in Section~\ref{subsec:analyses}. However, different from the case of \methodname\ with Qwen2.5-7B-Instruct
and DeepSeek-Math-7B-Instruct, the performance of \methodname\ drops when scaling the number of agents. Compared to \methodname (S), \methodname (O) is more robust in the performance degradation. According to our observation on the behavior of the LLM, the reason for this phenomenon is the failure of reasoning due to the overlong output sequence.

\begin{figure}[t]
	\vskip 0.0in
	\begin{center}
	\centering
    \subfigure[Qwen2.5-7B-Instruct\label{fig:qwen2.5-7b_time_cost}]{
			\begin{minipage}[t]{0.32\linewidth}
				\centering
				\includegraphics[width=1.0\linewidth]{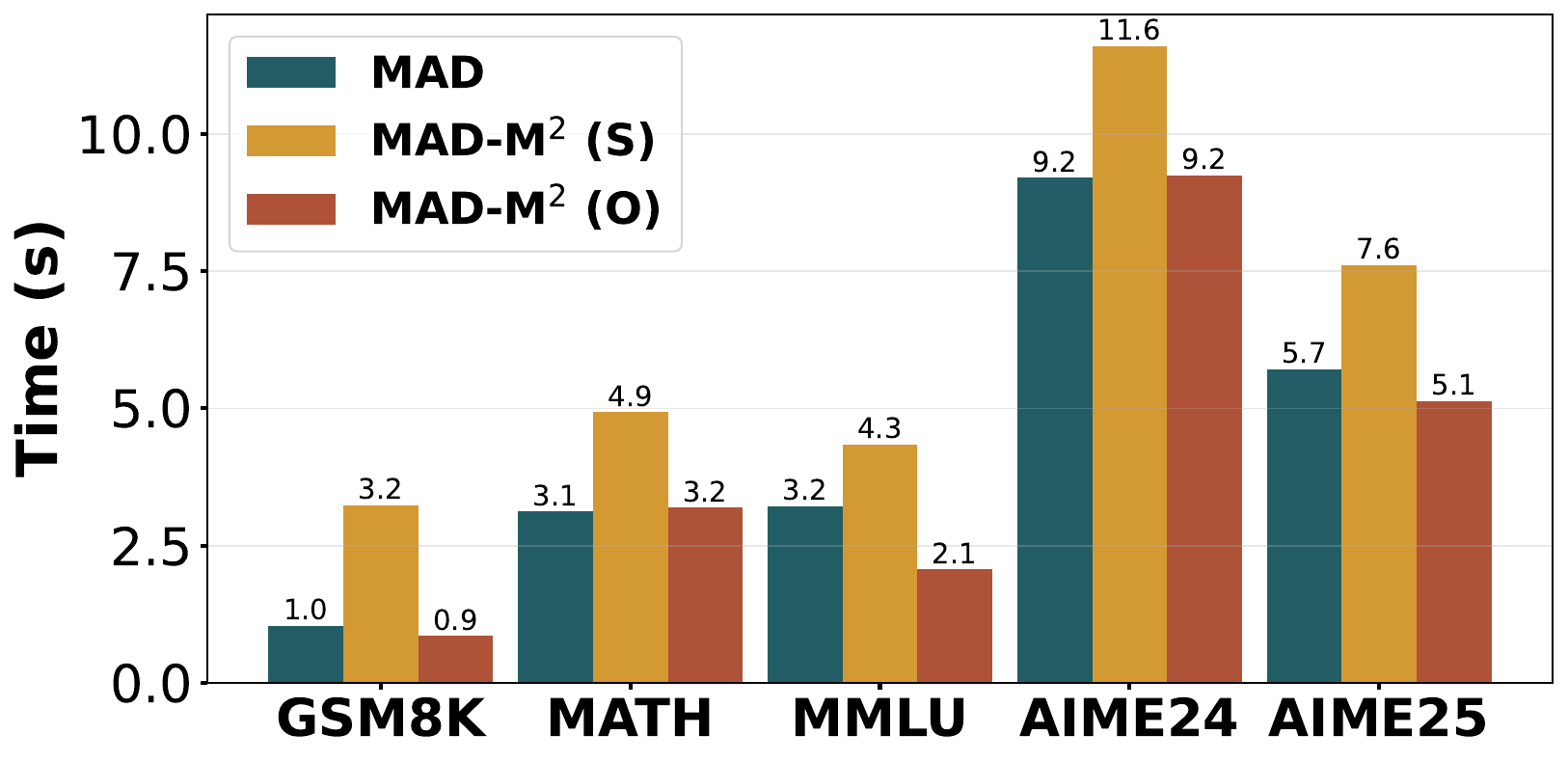}
		\end{minipage}}\vspace{-0.07cm}
    \subfigure[Qwen2.5-Math-7B-Instruct\label{fig:qwen2.5-math-7b_time_cost}]{
			\begin{minipage}[t]{0.32\linewidth}
				\centering
				\includegraphics[width=1.0\linewidth]{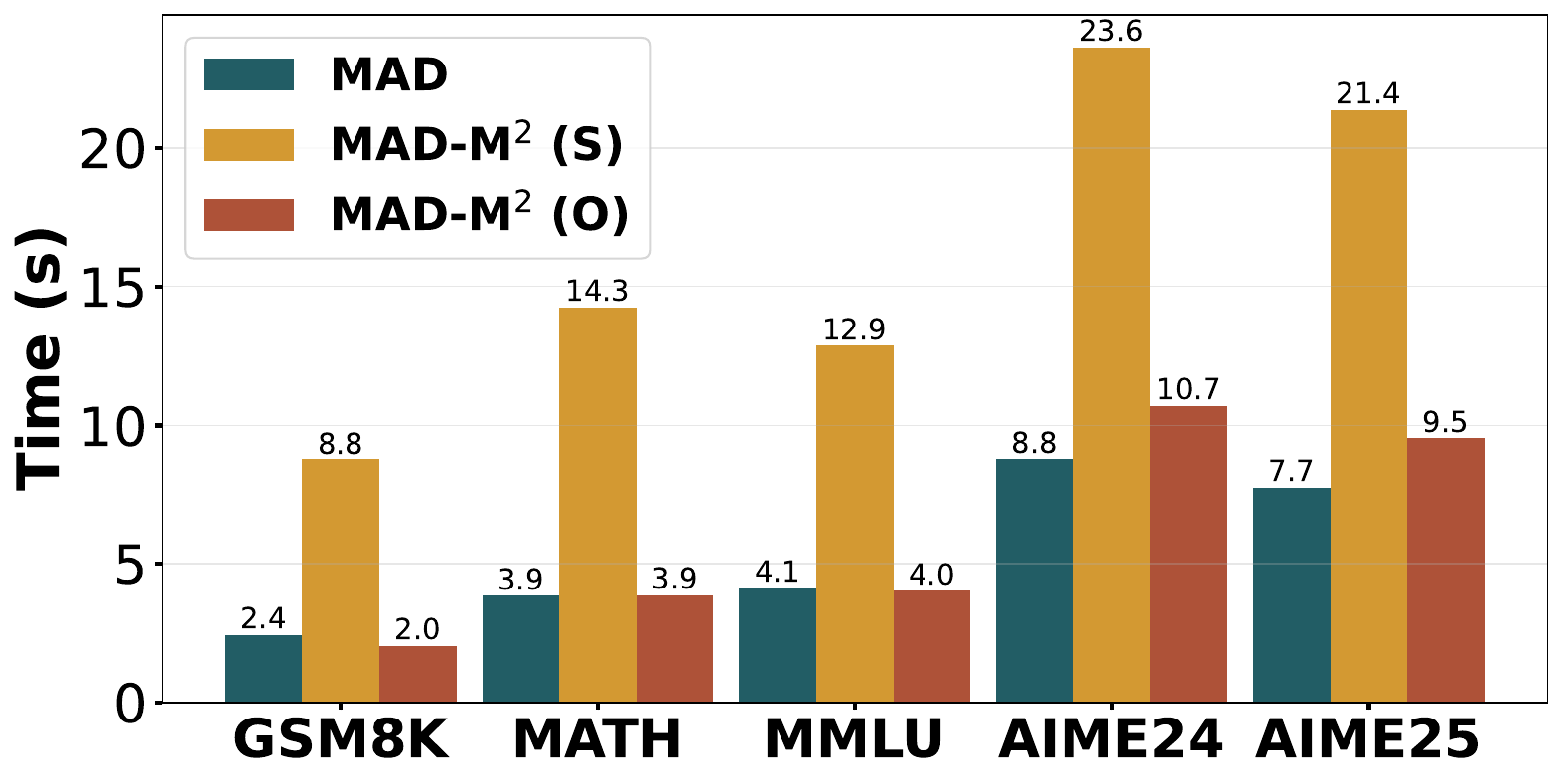}
		\end{minipage}}\vspace{-0.07cm}
    \subfigure[DeepSeek-Math-7B-Instruct\label{fig:deepseek-math-7b_time_cost}]{
			\begin{minipage}[t]{0.32\linewidth}
				\centering
				\includegraphics[width=1.0\linewidth]{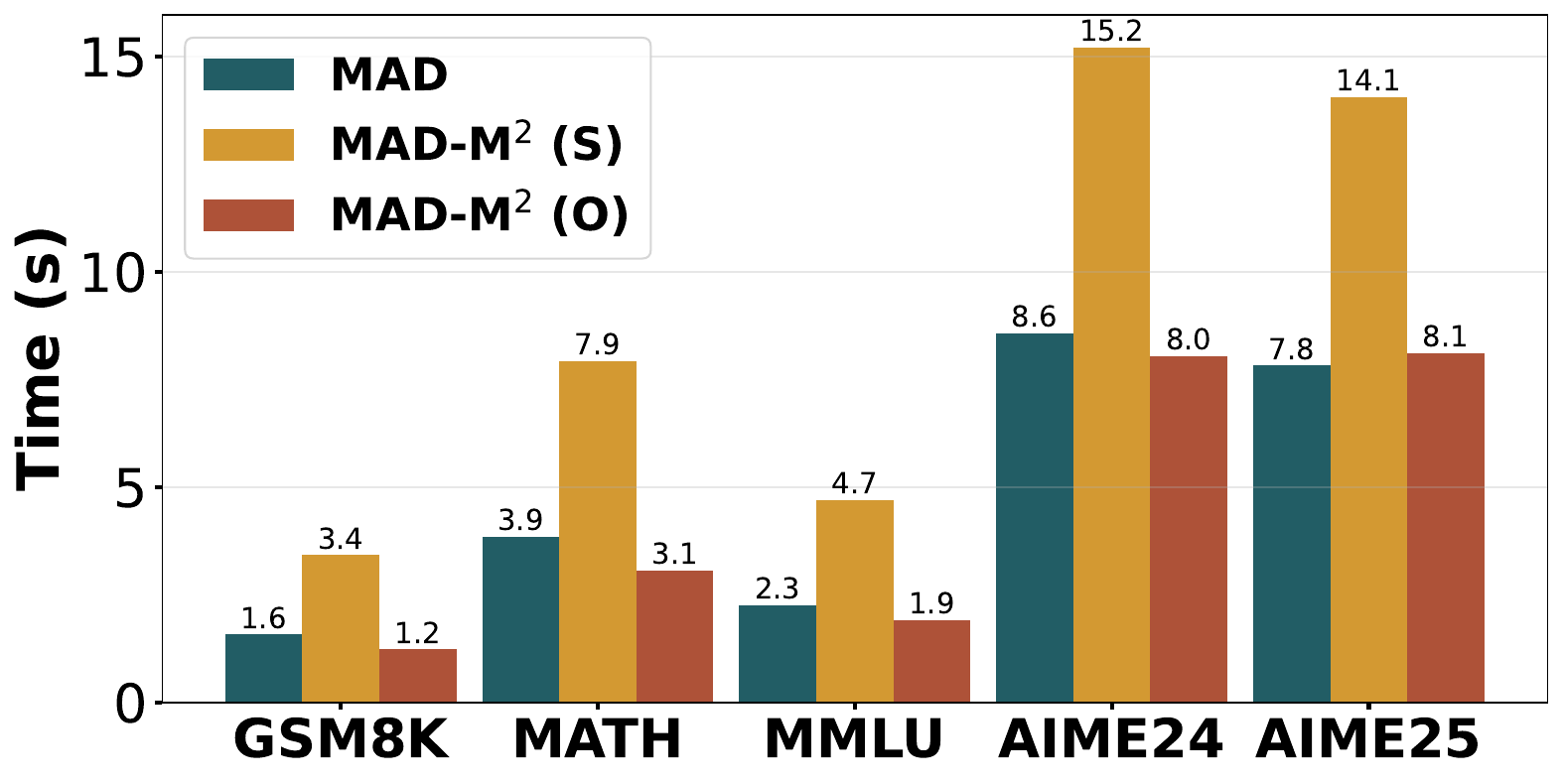}
		\end{minipage}}\vspace{-0.07cm}	
    \caption{\textbf{Analyses on time consumption of MAD and \methodname\ with different LLMs.} Each figure compares the time consumption of MAD, \methodname (S), and \methodname (O). Generally, \methodname (O) consumes less time than \methodname (S) since it is free of an additional self-evaluation step.}
    \label{fig:time_cost}
    \end{center}
    \vskip -0.2in
    \vspace{-4pt}
\end{figure}
\vspace{-8pt}
\subsection{Time Consumption Analysis}\label{appendix:time_consumption_analysis}
\vspace{-4pt}
Compared to the conventional MAD framework, \methodname\ introduces a self-evaluation and masking phase to remove the potential erroneous memories and further improve the performance of multi-agent debate. However, a concern raised here is whether the efficiency of MAD is negatively influenced.

To figure out this concern, we compared the time consumption of MAD, \methodname (S), and \methodname (O) on the five benchmarks with different LLMs. The quantitative results are visualized in Fig.~\ref{fig:time_cost}. According to the results in figures, we can observe that \methodname (O) consumes comparable time to the baseline, MAD, while much less time than \methodname (S). Such a gap probably results from the additional self-evaluation operation in the subjective masking strategy. Meanwhile, compared to those simple reasoning tasks (e.g., GSM8K, MATH, and MMLU\_Pro), more time is consumed on harder reasoning tasks (e.g., AIME). Thus, for the case of \methodname\ with powerful LLMs, our proposed \methodname (with the objective masking strategy) is effective and efficient in reasoning tasks.

\begin{table}[t]
\caption{\textbf{Analysis on strictness of \methodname (S).} Average accuracy with standard deviation is reported. We evaluate three small LLMs with both mathematical reasoning and language understanding benchmarks. For fairness, all results are the average of five trials on different seeds (i.e., 41-45).}
    \vspace{-8pt}
	\label{tab:analysis_strictness}
	\begin{center}
		\begin{small}
		\setlength\tabcolsep{5pt}
			\begin{threeparttable}
				\begin{tabular}{llllll}
					\toprule
                    Method & AIME24 & AIME25 & MMLU\_Pro & MATH & GSM8K\\
                    \midrule
                    \rowcolor{Gray}
                    \multicolumn{6}{c}{\texttt{Qwen2.5-7B-Instruct}} \\
                    \midrule
                    \methodname (S) & 13.3 & 3.3 & 43.6$\pm$2.3 & 56.8$\pm$2.1 & 89.0$\pm$4.0\\
                    \methodname (S)-Strict & 10.0 (\textcolor{green}{-3.3}) & 10.0 (\textcolor{red}{+6.7}) & 43.4$\pm$5.3 (\textcolor{green}{-0.2}) & 54.8$\pm$2.8 (\textcolor{green}{-2.0}) & 87.6$\pm$3.4 (\textcolor{green}{-1.4})\\
                    \midrule
                    \rowcolor{Gray}
                    \multicolumn{6}{c}{\texttt{Qwen2.5-Math-7B-Instruct}} \\
                    \midrule
                    \methodname (S) & 6.7 & 6.7 & 35.0$\pm$2.2 & 71.2$\pm$3.3 & 95.2$\pm$1.8\\
                    \methodname (S)-Strict & 13.3 (\textcolor{red}{+6.6}) & 6.7 (\textcolor{blue}{+0.0}) & 40.8$\pm$4.4 (\textcolor{red}{+5.8}) & 81.4$\pm$4.4 (\textcolor{red}{+10.2}) & 95.4$\pm$1.5 (\textcolor{red}{+0.2})\\
                    \midrule
                    \rowcolor{Gray}
                    \multicolumn{6}{c}{\texttt{DeepSeek-Math-7B-Instruct}} \\
                    \midrule
                    \methodname (S) & 0.0 & 0.0 & 30.8$\pm$5.2 & 37.0$\pm$5.1 & 80.8$\pm$3.5\\
                    \methodname (S)-Strict & 0.0 (\textcolor{blue}{+0.0}) & 0.0 (\textcolor{blue}{+0.0}) & 26.2$\pm$4.7 (\textcolor{green}{-4.6}) & 40.0$\pm$5.1  (\textcolor{red}{+3.0}) & 82.6$\pm$2.9 (\textcolor{red}{+1.8})\\
                    \midrule
                    \bottomrule
				\end{tabular}
			\end{threeparttable}
		\end{small}
	\end{center}
	\vskip -0.1in
\end{table}

\begin{table}[t]
\caption{\textbf{Comparison between \methodname and Sparse MAD.} Average accuracy with standard deviation is reported. We compare \methodname (S), \methodname (O), and Sparse MAD on Qwen2.5-7B-Instruct and Qwen2.5-Math-7B-Instruct with both mathematical reasoning and language understanding benchmarks. For fairness, all results are the average of five trials on different seeds (i.e., 41-45).}
    \vspace{-8pt}
	\label{tab:comparison_sparse_mad}
	\begin{center}
		\begin{small}
		\setlength\tabcolsep{12pt}
			\begin{threeparttable}
				\begin{tabular}{lccccc}
					\toprule
                    Method & AIME24 & AIME25 & MMLU\_Pro & MATH & GSM8K\\
                    \midrule
                    \rowcolor{Gray}
                    \multicolumn{6}{c}{\texttt{Qwen2.5-7B-Instruct}} \\
                    \midrule
                    \methodname (S) & 16.7 & \textbf{13.3} & \textbf{44.6$\pm$5.7} & \textbf{60.8$\pm$1.9} & 93.6$\pm$1.1\\
                    \methodname (O) & \textbf{20.0} & 0.0 & 42.2$\pm$4.2 & 58.8$\pm$5.6 & 92.8$\pm$1.3\\
                    Sparse MAD & 13.3 & 6.7 & 44.2$\pm$3.4 & 59.2$\pm$2.7 & \textbf{94.2$\pm$1.9}\\
                    \midrule
                    \rowcolor{Gray}
                    \multicolumn{6}{c}{\texttt{Qwen2.5-Math-7B-Instruct}} \\
                    \midrule
                    \methodname (S) & \textbf{16.7} & 13.3 & 66.2$\pm$4.6 & 62.8$\pm$4.3 & \textbf{96.4$\pm$1.8}\\
                    \methodname (O) & 13.3 & \textbf{16.7} & \textbf{67.4$\pm$7.0} & 63.8$\pm$3.0 & 95.8$\pm$2.7\\
                    Sparse MAD & 10.0 & 10.0 & 38.6$\pm$2.3 & \textbf{76.4$\pm$3.8} & 96.2$\pm$1.8\\
                    \midrule
                    \bottomrule
				\end{tabular}
			\end{threeparttable}
		\end{small}
	\end{center}
	\vskip -0.25in
\end{table}

\vspace{-6pt}
\subsection{Investigation on Strictness in Subjective Masking Strategy}\label{appendix:strictness_analysis}
\vspace{-4pt}
In our proposed \methodname, an important component is the evaluation and masking. In particular, in the subjective masking strategy, the memories are evaluated by agents between two rounds of debates. 
In our \methodname\ framework, when applying the subjective masking strategy, the memories are evaluated with flags ``YES'', ``NO'', and ``NOT SURE''. Typically, the ``YES'' and ``NO'' flags are mapped into True and False, respectively. For the ``NOT SURE'' flag, according to the strictness, it is mapped into True or False. Thus, to figure out the effect of the strictness, we compare the performance of \methodname (S) with and without the strictness. The results are reported in Table~\ref{tab:analysis_strictness}. 

According to the results, we can observe that the performance of \methodname (S) varies among LLMs. Specifically, in the case of \methodname (S) with weak LLMs (e.g., Qwen2.5-7B-Instruct), applying strict rules to evaluation deteriorates the performance. For example, the performance on MATH and GSM8K drops by $2.0\%$ and $1.4\%$, respectively. However, in the case of \methodname (S) with powerful LLMs (e.g., Qwen2.5-Math-7B-Instruct), the performance increases by $5.8\%$, $10.2\%$, and $0.2\%$, respectively, on MMLU\_Pro, MATH, and GSM8K. In summary, \methodname (S) with powerful LLMs benefits from the strict rules, while \methodname (S) fails to achieve better performance with strict rules.

\end{document}